\newcommand{\cmark}{\ding{51}}
\newcommand{\xmark}{\ding{55}}
\newcommand{\figleft}{{\em (Left)}}
\newcommand{\figright}{{\em (Right)}}
\def\eqref#1{equation~\ref{#1}}
\def\1{\bm{1}}
\DeclareMathAlphabet{\mathsfit}{\encodingdefault}{\sfdefault}{m}{sl}
\SetMathAlphabet{\mathsfit}{bold}{\encodingdefault}{\sfdefault}{bx}{n}
\def\gL{{\mathcal{L}}}
\def\gO{{\mathcal{O}}}
\newcommand{\E}{\mathbb{E}}
\newcommand{\Var}{\mathrm{Var}}
\newtheorem{theorem}{Theorem}[section]
\newtheorem{lemma}[theorem]{Lemma}
\title{Mismatched No More: \\Joint Model-Policy Optimization for Model-Based RL}
\author{%
  Benjamin Eysenbach\thanks{Equal contribution.}\;\;$^{1 \, 2}$ \enskip Alexander Khazatsky$^{* \, 3}$ \enskip Sergey Levine$^{2 \, 3}$ \enskip Ruslan Salakhutdinov$^{1}$ \\
  \AND
  \\ \vspace{-3em} \\
  $^{1}$Carnegie Mellon University, \quad $^{2}$Google Brain, \quad $^{3}$UC Berkeley \\
  \texttt{beysenba@cs.cmu.edu},\quad \texttt{khazatsky@cs.stanford.edu}
}
\begin{document}
\maketitle

\begin{abstract}
Many model-based reinforcement learning (RL) methods follow a similar template: fit a model to previously observed data, and then use data from that model for RL or planning. However, models that achieve better training performance (e.g., lower MSE) are not necessarily better for control: an RL agent may seek out the small fraction of states where an accurate model makes mistakes, or it might act in ways that do not expose the errors of an inaccurate model. As noted in prior work, there is an objective mismatch: models are useful if they yield good policies, but they are trained to maximize their accuracy, rather than the performance of the policies that result from them.  In this work, we propose a single objective for jointly training the model and the policy, such that updates to either component increase a lower bound on expected return. To the best of our knowledge, this is the first lower bound for model-based RL that holds globally and can be efficiently estimated in continuous settings; it is the only lower bound that mends the objective mismatch problem. A version of this bound becomes tight under certain assumptions. Optimizing this bound resembles a GAN: a classifier distinguishes between real and fake transitions, the model is updated to produce transitions that look realistic, and the policy is updated to avoid states where the model predictions are unrealistic. Numerical simulations demonstrate that optimizing this bound yields reward maximizing policies and yields dynamics that (perhaps surprisingly) can aid in exploration. We also show that a deep RL algorithm loosely based on our lower bound can achieve performance competitive with prior model-based methods, and better performance on certain hard exploration tasks.

\end{abstract}

\section{Introduction}

Much of the appeal of model-based RL is that model learning is a simple and scalable supervised learning problem.
However, even after learning a very accurate model, it is hard to say whether that model will actually be useful for model-based RL~\citep{farahmand2017value, lambert2020objective}. For example, a model might make small mistakes in critical states that cause a policy to take suboptimal actions. Alternatively, a model with large errors may yield a policy that attains high return if the model errors occur in states that the policy never visits.

The underlying problem is that dynamics models are trained differently from how they are used.
Typical model-based methods train a model using data sampled from the \emph{real} dynamics (e.g., using maximum likelihood), but apply these models by using data sampled from the \emph{learned} dynamics~\citep{deisenroth2011pilco, williams2017mpc, janner2019trust, hafner2019dream}.
Prior work has identified this \emph{objective mismatch} issue~\citep{farahmand2017value, luo2018algorithmic, lambert2020objective}: the model is trained using one objective, but the policy is trained using a different objective. Designing an objective for model training that is guaranteed to improve the expected reward remains an open problem. So, \emph{how should we train a dynamics model so that it produces high-return policies when used for model-based~RL?}

The key idea in this paper is to view model-based RL as a latent-variable problem: the latent variable is the trajectory and the cumulative reward is interpreted as the probability that the trajectory solves the task. Inferring the latent variable corresponds to learning a dynamics model. Latent variable models are typically learned via an evidence lower bound, and we show how a similar evidence lower bound provides yields a new objective for model-based RL. In the same way that the evidence lower bound is a joint optimization problem over two variables, our objective will \emph{jointly} optimize the model and the policy using the same objective: to produce realistic and high-return trajectories. Our objective differs from standard model-based RL objectives, where it is more common to pit the model \emph{against} the policy~\citep{bagnell2001solving, nilim2003robustness, ross2012agnostic}.
A consequence of maximizing the lower bound is that the dynamics model does not learn the true dynamics, but rather learns optimistic dynamics that facilitate exploration.

The main contribution of this work is an objective for model-based RL. To the best of our knowledge, this is the first lower bound for model-based RL that holds globally (unlike~\citet{luo2018algorithmic}) and can be efficiently estimated in continuous settings (unlike~\citet{kearns2002near}). It is the first lower bound that jointly optimizes the model and policy using the same objective. We also present a more complex version of this bound that becomes tight under some assumptions. Through numerical simulations in simple tasks, we demonstrate that optimizing the bound yields reward-maximizing policies and yields an optimistic dynamics model that can aid exploration. We also demonstrate that our bound gracefully accounts for function approximation error in the model -- the policy is penalized for taking transitions that the model cannot represent. Finally, we show that we can use parts of our theoretically-motivated objective to design a practically-applicable deep RL method that, despite deviating from the theory, can match the performance of prior model-based methods.

\section{Related Work}
\label{sec:related-work}

Most model-based RL methods use maximum likelihood to fit the dynamics model and then use RL to maximize the expected return under samples from that model~\citep{deisenroth2011pilco, williams2017mpc, chua2018deep, hafner2019dream, janner2019trust}. 
As noted in prior work, this maximum likelihood objective is not aligned with the RL objective: models that achieve higher likelihood do not necessarily produce better policies~\citep{ziebart2010modeling, talvitie2014model, farahmand2017value, luo2018algorithmic, lambert2020objective}. This issue is referred to as the \emph{objective mismatch} problem: the model and policy (or planner) are optimized using different objectives. This problem arises in almost all model-based RL approaches, including those that train the model to predict the value function~\citep{oh2017value, schrittwieser2020mastering} or that perform planning~\citep{chua2018deep, schrittwieser2020mastering}.

Some prior work addresses this problem by decreasing the rewards at states where the model is inaccurate~\citep{sorg2010internal, Kidambi2020MOReLM, yu2020mopo, yu2021combo, luo2018algorithmic}, a strategy that our method will also employ through a discriminator. While some of these methods also use discriminators in this manner, ours is the first to optimize a lower bound on returns. Other work modifies the model objective to include multi-step rollouts~\citep{joseph2013reinforcement, talvitie2014model, venkatraman2016improved, farahmand2017value, asadi2018lipschitz, asadi2019combating}. Our method will modify the model objective in a different way, so that the model objective is exactly the same as the policy objective.
Some prior work directly optimizes the model to produce good policies~\citep{okada2017path, amos2018differentiable, srinivas2018universal, d2020gradient, nikishin2021control}, as theoretically analyzed in~\citet{grimm2020value}. While our aim is the same as these prior methods, our approach will not require differentiating through unrolled model updates or optimization procedures.

\begin{wraptable}[10]{r}{0.67\textwidth}
    \centering
    \vspace{-0.5em}
    \caption{\footnotesize \textbf{Lower bounds for model-based RL.}}
    \vspace{-0.5em}
    {\footnotesize
    \begin{tabular}{c|p{1.3cm}|p{1.5cm}|p{1cm}|p{1cm}}
        & \citet{luo2018algorithmic} & \citet{kearns2002near} & MnM (Eq.~\ref{eq:lb}) & MnM (Eq.~\ref{eq:lb2})\\ \midrule
        holds globally & \xmark & \cmark & \cmark & \cmark \\
        efficient to compute & \cmark & \xmark & \cmark & \cmark \\
        unified objective & \xmark & \xmark & \cmark & \cmark\\
        tight at optimality & \xmark$^\dagger$ & \cmark & \xmark & \cmark \\
        \multicolumn{5}{l}{\hspace{6em}\scriptsize $^\dagger$ Discrepancy measure is non-zero in stochastic environments.}
    \end{tabular}
    }
    \label{tab:prior-work}
\end{wraptable}

Our work builds on prior lower bounds for model-based RL. \citet{kearns2002near} provide a lower bound that holds globally, but that is only computable in tabular settings. \citet{luo2018algorithmic} provide a lower bound that can be efficiently estimated, but which only holds for nearby policies and models. As shown in Table~\ref{tab:prior-work}, our bound combines the strengths of these prior works, providing a lower bound that holds globally and can be efficiently estimated in MDPs with continuous states and actions. Unlike prior work, our bound also mends the objective mismatch problem.

Our theoretical derivation builds on prior work that casts model-based RL as a two-player game between a model-player and a policy-player~\citep{bagnell2001solving, nilim2003robustness, ross2011dagger, rajeswaran2020game}. Whereas prior work pits model and policy against one another, our formulation will result in a cooperative game: the model and policy cooperate to optimize the \emph{same} objective (a lower bound on the expected return). Our approach, though structurally resembling a GAN, is different from prior work that replaces a maximum likelihood model with a GAN model~\citep{bai2019model, chen2020gan, kurutach2018learning}.

The most similar prior work is VMBPO~\citep{chow2020variational}, which also jointly optimizes the model and the policy using the same objective.  However, while our objective is a lower bound on expected return, VMBPO maximizes a different, risk-seeking objective, which is an \emph{upper} bound on expected return (see Appendix~\ref{appendix:vmbpo}). This different objective can be expressed as the expected return plus the variance of the return, so VMBPO has the undesirable property of preferring policies that receive slight lower return if the variance of the return is much larger (see Appendix~\ref{appendix:vmbpo}). Indeed, while most of the components of our method (e.g., classifiers, GAN-like models) have been used in prior work, our paper is the first to provide a precise recipe for combining these components into an objective that is a provable lower bound.

\section{A Unified Objective for Model-Based RL}
\label{sec:lower-bound}

\paragraph{Notation.}
We focus on the Markov decision process with states $s_t$, actions $a_t$, initial state distribution $p_0(s_0)$, positive reward function $r(s_t, a_t) > 0$ , and dynamics $p(s_{t+1} \mid s_t, a_t)$. Our aim is to learn a control policy $\pi_\theta(a_t \mid s_t)$ with parameters $\theta$ that maximizes the expected discounted return:
\begin{equation}
    \max_\theta \E_{\pi_\theta}\bigg[\sum_{t=0}^\infty \gamma^t r(s_t, a_t) \bigg]. \label{eq:standard-obj}
\end{equation}
We use transitions $(s_t, a_t, r_t, s_{t+1})$ collected from the (real) environment to train the dynamics model $q_\theta(s_{t+1} \mid s_t, a_t)$, and use transitions sampled from this learned model to train the policy.
To simplify notation, we will define a trajectory $\tau \triangleq (s_0, a_0, s_1, a_1, \cdots)$ as a sequence of states and actions visited in an episode. We define $R(\tau) \triangleq \sum_{t=0}^\infty \gamma^t r(s_t, a_t)$ as the discounted return of a trajectory. We define two distributions over trajectories: $p^\pi(\tau)$ and $q^\pi(\tau)$ for when policy $\pi_\theta$ interacts with dynamics $p(s_{t+1} \mid s_t, a_t)$ and $q_\theta(s_{t+1} \mid s_t, a_t)$, respectively:
\begin{align*}
    p^\pi(\tau) &= p_0(s_0) \prod_{t=0}^\infty p(s_{t+1} \mid s_t, a_t) \pi_\theta(a_t \mid s_t), \quad q^\pi(\tau) = p_0(s_0) \prod_{t=0}^\infty q_\theta(s_{t+1} \mid s_t, a_t) \pi_\theta(a_t \mid s_t).
\end{align*}

\paragraph{Desiderata.}
Our aim is to design an objective $\gL(\theta)$ with two properties. \emph{First}, this objective should be a lower bound on the expected return in the true environment: if the policy does well in the learned model, we are guaranteed that the policy will also do well in the true environment.
The expected return under the learned model, which most prior model-based RL methods use to train the policy, is not a lower bound on the expected return~\citep{kearns2002near, luo2018algorithmic}.
While prior work has made strides in developing lower bounds for model-based RL, even the best lower bounds do not hold for all models~\citep{luo2018algorithmic} or are limited to tabular settings~\citep{kearns2002near}.

\emph{Second}, this objective should be the same for the policy and the dynamics model, such that updates to the model would improve the policy, and vice versa. This desiderata is important because prior work has found that training the model to be more accurate (increase likelihood) can decrease the policy's expected return under that model~\citep{ziebart2010modeling, talvitie2014model, farahmand2017value, luo2018algorithmic, lambert2020objective}.
One prior method (VMBPO~\citep{chow2020variational}) does train the model and policy with the same objective, but this objective corresponds to a risk-seeking, upper-bound on the expected returns.

\paragraph{An objective for model-based RL.}
We now introduce an objective that achieves these aims.
The key idea to deriving this result is to take a probabilistic perspective on decision making: we view the trajectory as an unobserved random variable, and the reward function as the probability that a trajectory solves a task. Then, the problem of inferring this random variable is equivalent to learning the dynamics model. Using an evidence lower bound, we obtain an objective for jointly training the model and policy. We provide the full derivation in Appendix~\ref{proof:lemma-main-result}.

Our resulting objective is the policy's reward when interacting with the learned model, but using a new reward function. The new reward function combines the task reward with a term that measures the difference between the learned model and the real environment. We define our objective
\begin{align}
    \gL(\theta) \triangleq \E_{q^{\pi_\theta}(\tau)} \bigg[\sum_{t=0}^\infty \gamma^t \tilde{r}(s_t, a_t, s_{t+1}) \bigg], \label{eq:lb}
\end{align}
where the modified reward function is defined as
\footnotesize \begin{align}
\tilde{r}(s_t, a_t, s_{t+1}) \triangleq (1 - \gamma) \log r(s_t, a_t) + \log \left( \frac{p(s_{t+1} \mid s_t, a_t)}{q(s_{t+1} \mid s_t, a_t)} \right) - (1 - \gamma) \log (1 - \gamma). \label{eq:aug-reward}
\end{align} \normalsize

Intuitively, the new reward function $\tilde{r}$ penalizes the policy for taking transitions that are unlikely under the true dynamics model, similar to prior work~\citep{vemula2020planning, eysenbach2020off, yu2021combo}. Later, we will show that we can estimate this augmented reward \emph{without knowing the true environment dynamics} by using a classifier.

We will optimize this lower bound with respect to both the policy $\pi_\theta(a_t \mid s_t)$ and the dynamics model $q_\theta(s_{t+1} \mid s_t, a_t)$. For the policy, we maximize the modified reward using samples from the learned model; the only difference from prior work is the modification to the reward function. Training the dynamics model using this objective is very different from standard maximum likelihood training. The model is optimized to sample trajectories that are similar to real dynamics (like a GAN) and that have high reward (unlike a GAN). This objective differs from VMBPO~\citep{chow2020variational} by taking the $\log(\cdot)$ of the original reward functions; our experiments demonstrate that excluding this component invalidates our lower bound and results in learning suboptimal policies (Fig.~\ref{fig:gridworld-line}).

Our objective achieves the two desiderata. 
\emph{First}, our objective is a lower bound on the expected return. To state this result formally, we will take the logarithm of the expected return. Of course, maximizing the $\log(\cdot)$ of the expected return is equivalent to maximizing the expected return.
\begin{theorem} \label{lemma:main-result}
The following bound holds for \textbf{any} dynamics $q(s_{t+1} \mid s_t, a_t)$ and policy $\pi(a_t \mid s_t)$:
\begin{align*}
    \log \E_{\pi}\bigg[\sum_{t=0}^\infty \gamma^t r(s_t, a_t) \bigg] &\ge \gL(\theta).
\end{align*}
\end{theorem}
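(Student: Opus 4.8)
The plan is to prove the bound as an evidence lower bound, following the control-as-inference viewpoint flagged above. Since $r>0$, first split off a constant: $\log\E_\pi[R(\tau)] = -\log(1-\gamma) + \log\E_{p^\pi}\!\big[\sum_{t\ge0}(1-\gamma)\gamma^t r(s_t,a_t)\big]$. Now introduce a random horizon $T$ with $\Pr[T=t]=(1-\gamma)\gamma^t$ and let $\mu$ be the law that draws $T\sim\mathrm{Geom}$ and then rolls out $\pi$ under the \emph{true} dynamics $p$ to obtain $(s_0,a_0,\dots,s_T,a_T)$; then $\E_{p^\pi}[\sum_{t\ge0}(1-\gamma)\gamma^t r(s_t,a_t)] = \E_\mu[r(s_T,a_T)]$. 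Change measure on the length-$T$ prefix to the law $\nu$ that keeps the same horizon $T$ but rolls out $\pi$ under the \emph{learned} dynamics $q$: since $p_0(s_0)$, the horizon law, and every $\pi(a_t\mid s_t)$ are shared, the Radon--Nikodym derivative is just a product of per-transition ratios, so $\E_\mu[r(s_T,a_T)] = \E_\nu\!\big[\big(\prod_{t=0}^{T-1}\tfrac{p(s_{t+1}\mid s_t,a_t)}{q(s_{t+1}\mid s_t,a_t)}\big)\,r(s_T,a_T)\big]$. This is precisely an ELBO, with $\nu$ as the variational posterior and $r(s_T,a_T)$ as the likelihood weight of the ``optimality'' event.

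Next I would apply Jensen's inequality (concavity of $\log$) to the $\nu$-expectation and evaluate term by term, using that under $\nu$ the horizon $T$ is independent of the rollout. The reward term gives $\sum_{t\ge0}(1-\gamma)\gamma^t\,\E_{q^\pi}[\log r(s_t,a_t)]$, which together with the split-off $-\log(1-\gamma) = -\sum_{t\ge0}(1-\gamma)\gamma^t\log(1-\gamma)$ reproduces the $(1-\gamma)\log r(s_t,a_t) - (1-\gamma)\log(1-\gamma)$ part of $\tilde r$. The log-ratio at transition $(s_t,a_t,s_{t+1})$ is present in the prefix exactly when $T\ge t+1$ (probability $\gamma^{t+1}$), and on that event the prefix follows the $q^\pi$ law, so this term contributes $\sum_{t\ge0}\gamma^{t+1}\,\E_{q^\pi}\!\big[\log\tfrac{p(s_{t+1}\mid s_t,a_t)}{q(s_{t+1}\mid s_t,a_t)}\big]$. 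Collecting everything, $\log\E_\pi[R(\tau)] \ge \gL(\theta) + \sum_{t\ge0}(\gamma^{t+1}-\gamma^t)\,\E_{q^\pi}\!\big[\log\tfrac{p(s_{t+1}\mid s_t,a_t)}{q(s_{t+1}\mid s_t,a_t)}\big]$.

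To close, observe $\E_{q^\pi}\!\big[\log\tfrac{p(s_{t+1}\mid s_t,a_t)}{q(s_{t+1}\mid s_t,a_t)}\big] = -\E_{q^\pi}\big[\KL\big(q(\cdot\mid s_t,a_t)\,\|\,p(\cdot\mid s_t,a_t)\big)\big]\le 0$, while $\gamma^{t+1}-\gamma^t = -(1-\gamma)\gamma^t<0$; hence each summand of the correction is nonnegative, and dropping it yields $\log\E_\pi[R(\tau)]\ge\gL(\theta)$, as claimed. I expect the discount bookkeeping to be the main obstacle: one must arrange the first step so that a \emph{single} use of Jensen lands exactly on $\log\E_\pi[R(\tau)]$ up to the $-\log(1-\gamma)$ constant, and then notice that the model-error terms emerge with weight $\gamma^{t+1}$ rather than $\gamma^t$ --- which is harmless only because the conditional KL is nonnegative. (A naive per-state Bellman recursion that exponentiates the value function seems instead to leak an extra $\log\tfrac1{1-\gamma}$, so the discount is better handled at the trajectory level.) Minor care is also needed in degenerate cases --- $q$ not absolutely continuous with respect to $p$ on a positive-probability set, or $\E_{q^\pi}[\log r]=-\infty$ --- where $\gL(\theta)=-\infty$ and the inequality is vacuous.
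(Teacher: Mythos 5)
Your proposal is correct and follows essentially the same route as the paper's proof: rewrite the discounted return via a geometric random horizon, change measure from the true to the learned dynamics via a product of per-transition ratios, and apply Jensen's inequality (the paper applies it in two stages, over the horizon and then over the trajectory, which composes to your single application over the joint law). The one genuine difference is bookkeeping: the paper's prefix includes the transition out of $(s_H,a_H)$, so the log-ratio terms land directly with weight $\gamma^t$ and match $\tilde r$ exactly, whereas your shorter prefix gives weight $\gamma^{t+1}$ and yields a slightly tighter intermediate bound, which you then correctly relax to $\gL(\theta)$ by dropping the nonnegative $(1-\gamma)\gamma^t\,\E_{q^\pi}[\KL(q\,\|\,p)]$ correction.
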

The proof is presented in Appendix~\ref{proof:lemma-main-result}.
To the best of our knowledge, this is the first global (unlike~\citet{luo2018algorithmic}) and efficiently-computable (unlike~\citet{kearns2002near}) lower bound for model-based RL. The model and the policy are trained using the same objective: updating the model not only increases the objective for the model, but also increases the objective for the policy.

Sec.~\ref{sec:alg} will introduce an algorithm to maximize this lower bound. While this lower bound may not be tight, experiments in Sec.~\ref{sec:experiments} demonstrate that optimizing this lower bound yields policies that achieve high reward across a wide range of tasks. In Appendix~\ref{appendix:tight}, we propose a variant of this bound that does become tight at convergence. Because this tight bound is more complex, we focus on the simple bound in this paper.

\paragraph{The optimal dynamics are optimistic.}
We now return to analyzing the simpler lower bound ($\gL(\theta)$ in~Eq.~\ref{eq:lb}). In stochastic environments, the dynamics model that optimizes this lower bound is not equal to the true environment dynamics. Rather, it is biased towards sampling trajectories with high return. Ignoring parametrization constraints, the dynamics model that optimizes our lower bound is $q^*(\tau) = \frac{p(\tau)R(\tau)}{\int p(\tau') R(\tau') d\tau'}$ (proof in Appendix~\ref{proof:lemma-tight}.).
While it may seem surprising that the objective-optimizing dynamics would differ from the true dynamics, this result is analogous to a VAE, where the ELBO-optimizing encoder differs from the prior.
The optimism in the dynamics model may accelerate policy optimization, a hypothesis we will test in Sec.~\ref{sec:didactic}.

Would the optimistic dynamics overestimate the policy's return, violating Theorem~\ref{lemma:main-result}? While using the optimistic dynamics with the \emph{original} reward function will overestimate the true return, using the optimistic dynamics with the \emph{augmented} reward function yields a valid lower bound. We demonstrate this effect in Fig.~\ref{fig:lower-bounds}.

\section{Practical Optimization of the Lower Bound}
\label{sec:alg}

The previous section presented a single (global) lower bound ($\gL$ from Eq.~\ref{eq:lb}) for jointly optimizing the policy and the dynamics model. In this section, we develop a practical %
algorithm for optimizing this lower bound. The main challenge in optimizing this bound is that the augmented reward function depends on the transition probabilities of the real environment, $p(s_{t+1} \mid s_t, a_t)$, which are unknown. We address this challenge by learning a classifier (Sec.~\ref{sec:classifier}). Of course, Theorem~\ref{lemma:main-result} only holds if the classifier is Bayes-optimal. We then describe the precise update rules for the policy, dynamics model, and classifier (Sec.~\ref{sec:updates}).

\subsection{Estimating the Augmented Reward Function}
\label{sec:classifier}

To estimate the augmented reward function, which depends on the transition probabilities of the real environment, we learn a classifier that distinguishes real transitions from fake transitions. This approach is similar to GANs~\citep{goodfellow2014gan} and similar to prior work in RL~\citep{eysenbach2020off, yu2021combo}. We use \mbox{$C_\phi(s_t, a_t, s_{t+1}) \in [0, 1]$} to denote the classifier, which we train to distinguish real versus model transitions using the standard cross entropy loss:
\begin{align}
      \max_\phi \gL_C(s_t^\text{real}, a_t^\text{real}, s_{t+1}^\text{real}, s_{t+1}^\text{model}; \phi) \triangleq \log C_\phi(s_t^\text{real}, a_t^\text{real}, s_{t+1}^\text{real}) + \log \left( 1 - C_\phi(s_t^\text{real}, a_t^\text{real}, s_{t+1}^\text{model}) \right).
      \label{eq:classifier}
\end{align}
Note that the real transition $(s_t^\text{real}, a_t^\text{real}, s_{t+1}^\text{real})$ and model transition $(s_t^\text{real}, a_t^\text{real}, s_{t+1}^\text{model})$ have the same initial state and initial action.
Once trained, we can use the classifier's predictions to estimate the augmented reward function:
\begin{equation}
    \text{\footnotesize$\tilde{r}(s_t, a_t, s_{t+1}) \approx \log r(s_t, a_t) + \log \left(\frac{C_\phi(s_t, a_t, s_{t+1})}{1 - C_\phi(s_t, a_t, s_{t+1})} \right).$} \label{eq:augmented-reward}
\end{equation}
The approximation above reflects function approximation error in learning the classifier. Now that we can estimate the augmented reward function, we can apply \emph{any} RL method to maximize the augmented reward under transitions sampled from the dynamics model. The following section describes a particular instantiation using an off-policy RL algorithm.

\begin{wrapfigure}{R}{0.5\textwidth}
    \centering
    \vspace{-3em}
    \includegraphics[width=\linewidth]{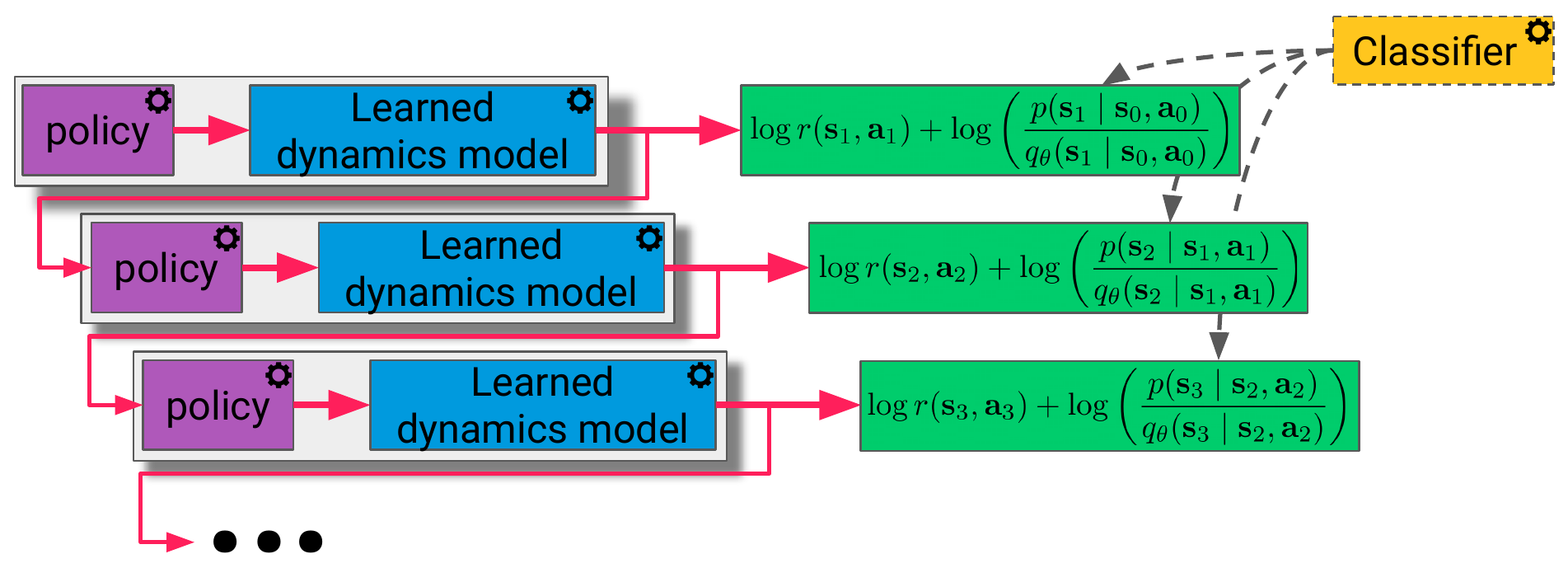}
    \vspace{-1em}
    \caption{{\footnotesize \textbf{Mismatched No More} is a model-based RL algorithm that learns a policy, dynamics model, and classifier. The classifier distinguishes real transitions from model transitions.
    The policy and dynamics are jointly optimized to sample transitions that yield high returns and look realistic, as estimated by the classifier.}} \label{fig:method}
    \vspace{-1.em}
\end{wrapfigure}

\subsection{Updating the Model and Policy}
\label{sec:updates}

We now present our complete method, which trains three components: a classifier, a policy, and a dynamics model. Our method alternates between \emph{(1)} updating the policy (by performing RL using model experience with augmented rewards) and \emph{(2)} updating the dynamics model and classifier (using a GAN-like objective). In describing the loss functions below, we use the superscripts $(\cdot)^\text{real}$ and $(\cdot)^\text{model}$ to denote transitions that have been sampled from the true environment dynamics or the learned dynamics function. To reduce clutter, we omit the superscripts when unambiguous.

\paragraph{Updating the policy.}
The policy is optimized to maximize the augmented reward on transitions sampled from the learned dynamics model. While this optimization can be done using any RL algorithm, including on-policy methods, we will focus on an off-policy actor-critic method.

We define the Q function as sum of \emph{augmented} rewards under the learned dynamics model:
{\footnotesize \begin{equation}
     Q(s_t, a_t) \triangleq \E_{\substack{\pi(a_t \mid s_t),\\q_\theta(s_{t+1 \mid s_t, a_t)}}}\left[\sum_{t' = t}^\infty \gamma^{t' - t} \tilde{r}(s_{t'}, a_{t'}) \mid \substack{s_t=s_t,\\ a_t=a_t}\right].
\end{equation}}\!\!
We approximate the Q function using a neural network $Q_\psi(s_t, a_t)$ with parameters $\phi$.
We train the Q function using the TD loss on transitions sampled from the \emph{learned} dynamics model:
{\footnotesize \begin{equation}
    \gL_Q(s_t, a_t, r_t, s_{t+1}^\text{model}; \psi) = \left(Q_\psi(s_t, a_t) - \lfloor y_t \rfloor_\text{sg} \right)^2, \label{eq:q-loss}
\end{equation}}\!\!
where $\lfloor \cdot \rfloor_\text{sg}$ is the stop-gradient operator and $y_t \!=\! \tilde{r}(s_t, a_t, s_{t+1}^\text{model}) \!+\! \gamma \E_{\pi(a_{t+1} \mid s_{t+1}^\text{model})}\left[Q_\psi(s_{t+1}^\text{model}, a_{t+1}) \right]$ is the TD target. The augmented reward $\tilde{r}$ is estimated using the learned classifier (Eq.~\ref{eq:augmented-reward}). To estimate the corresponding value function, we use a 1-sample approximation: $V_\psi(s_t) = Q_\psi(s_t, a_t)$ where $a_t \sim \pi_\theta(a_t \mid s_t))$. The policy is trained to maximize the Q function:
{\footnotesize \begin{equation}
    \max_\theta \gL_{\pi}(s_t; \theta) \triangleq \E_{\pi_\theta(a_t \mid s_t)}\left[Q_\psi(s_t, a_t)\right]. \label{eq:pi-loss}
\end{equation}}\!\!
In our implementation, we regularize the policy by adding an additional entropy regularizer.
Following prior work~\citep{fujimoto2018td3}, we maintain two Q functions and two target Q functions, using the minimum of the two to compute the TD target.
See Appendix~\ref{appendix:details} for details.

\begin{figure}[t]
\vspace{-1em}
\begin{algorithm}[H]
{\footnotesize
\caption{{\footnotesize \textbf{Mismatched no More (MnM)} is an algorithm for model-based RL. The method alternates between training the policy on experience from the learned dynamics model with augmented rewards and updating the model+classifier using a GAN-like loss. 
While we use an off-policy RL algorithm on L\ref{line:rl}, any other RL algorithm can be substituted.}}\label{alg:method}
\begin{algorithmic}[1]
\While{not converged}
\State Sample experience from the learned model.
\State Modify rewards using the classifier (Eq.~\ref{eq:augmented-reward}).
\State Update the policy and Q function using the model experience and modified rewards (Eq.s~\ref{eq:pi-loss} and~\ref{eq:q-loss}). \label{line:rl}
\State Update model and classifier using GAN-like losses (Eq.s~\ref{eq:classifier} and~\ref{eq:model-objective}).
\State (Infrequently) Sample experience from the real model.
\EndWhile
\State \textbf{return} policy $\pi_\theta(a_t \mid s_t)$.
\end{algorithmic}}
\end{algorithm}
\vspace{-2em}
\end{figure}

\paragraph{Updating the dynamics model.}
To optimize the dynamics model, we rewrite the lower bound in terms of a single transition (derivation in Appendix~\ref{appendix:single-transition}):
{\footnotesize \begin{align}
    \gL_q(s_t^\text{real}, a_t^\text{real}; \theta) = \E_{s_{t+1}^\text{model} \sim q_\theta(s_{t+1} \mid s_t^\text{real}, a_t^\text{real})} \bigg[V_\psi(s_{t+1}^\text{model}) + \log \left(\frac{C_\phi(s_t^\text{real}, a_t^\text{real}, s_{t+1}^\text{model})}{1 - C_\phi(s_t^\text{real}, a_t^\text{real}, s_{t+1}^\text{model})} \right) \bigg]. \label{eq:model-objective}
\end{align}}\!\!
This loss is an approximation of our original lower bound (Eq.~\ref{eq:lb}) because we estimate the difference in dynamics using a classifier.  This approximation is standard in prior work on GANs~\citep{goodfellow2014gan} and adversarial inference~\citep{donahue2016adversarial, dumoulin2016adversarially}.

Intuitively, the procedure for optimizing the dynamics model and the classifier resembles a GAN~\citep{goodfellow2014gan}: the classifier is optimized to distinguish real transitions from model transitions, and the model is updated to fool the classifier (and increase rewards). However, \emph{our method is not equivalent to simply replacing a maximum likelihood model with a GAN model}. Indeed, such an approach would not optimize a lower bound on expected return. Rather, our model objective includes an additional value term and our policy objective includes an additional classifier term. These changes enable the model and policy to optimize the same objective, which is a lower bound on expected return.

\paragraph{Algorithm summary.}

We summarize the method in Alg.~\ref{alg:method} and provide an illustration in Fig.~\ref{fig:method}. We call the method \textsc{Mismatched no More} (MnM) because the policy and model optimize the same objective, thereby resolving the objective mismatch problem noted in prior work. While the model and policy are optimized using the same objective, that objective can stop being a lower bound if the learned classifier is not Bayes-optimal.

Implementing MnM on top of a standard model-based RL algorithm is straightforward. First, create an additional classifier network. Second, instead of using the maximum likelihood objective to train the model, use the GAN-like objective in Eq.~\ref{eq:model-objective} to update both the model and the classifier. Third, add the classifier's logits to the predicted rewards (Eq.~\ref{eq:augmented-reward}).
Following prior work~\citep{janner2019trust}, we learn a neural network to predict the true environment rewards. %

\subsection{A Note about Exploration}
\label{sec:exploration}
The classifier term in the augmented reward (Eq.~\ref{eq:aug-reward}) is a double edged sword. Our theoretical derivation suggests that this term should appear, and our didactic experiments demonstrate that removing this term results in suboptimal behavior. Experiments also show that this term can effectively combat errors in the learned model. Prior work in offline RL has found similar model-error reward terms critical for achieving good performance in the offline setting~\citep{sorg2010internal, Kidambi2020MOReLM, yu2020mopo, yu2021combo}.

However, when scaling MnM to continuous control tasks in the online setting, we found that including this term hurts performance (Fig.~\ref{fig:no-classifier}). This makes sense: this classifier term is exactly the opposite of exploration objectives based on model error (e.g.,~\cite{stadie2015incentivizing}), and penalizes the policy for performing exploration. Our continuous control experiments in Sec.~\ref{sec:comparisons} will deviate from our theoretical derivation: they will use the model objective suggested by our theory, but the same policy objective as prior. That is, the policy maximizes $r(s_t, a_t)$ instead of $\tilde{r}(s_t, a_t)$ (Eq.~\ref{eq:aug-reward}). We call this method ``MnM-approx.'' We note that many theoretical model-based RL papers also find it necessary to implement practical algorithms that differ from the theory~\citep{luo2018algorithmic, vemula2020planning, yu2021combo}.

\section{Numerical Simulations}
\label{sec:experiments}

The primary aim of our experiments is to verify that our objective is a valid lower bound, and that maximizing the bound yields reward-maximizing policies. We study these questions in Sec.~\ref{sec:didactic}, where we will use tabular problems so that we can study the objective in the absence of function approximation error. Then, in Sec.~\ref{sec:comparisons},
we show adapting part of our objective into a scalable model-based RL algorithm yields a method that, while deviating from the theory, can achieve competitive results on benchmark tasks.

\subsection{Understanding the Lower Bound and the Learned Dynamics}
\label{sec:didactic}

Our didactic experiments test whether optimizing the lower bound produces optimal policies and study how the components of the lower bound. We use tabular domains in this section, as they allow us to analytically compute the optimal policy for comparison, and allow us to evaluate the lower bound in the absence of function approximation error. 

Our first experiment compares the lower bound to an oracle method that applies Q-learning to a perfect dynamics model. In comparison to this baseline, MnM learns a dynamics model using the GAN-like objective (Eq.~\ref{eq:model-objective}) and maximizes a modified reward function (Eq.~\ref{eq:augmented-reward}). We also compare to VMBPO, which learns a dynamics model similar to MnM but omits the log-transformation of the reward function; this transformation ordinarily encourages pessimistic behavior. For this experiment, we use a $10 \times 10$ gridworld with stochastic dynamics and sparse reward, shown in Fig.~\ref{fig:stochastic-gridworld} \figleft. The results, shown in Fig.~\ref{fig:stochastic-gridworld}, show that MnM outperforms both Q-learning with the correct model and VMBPO on this task. We hypothesize that VMBPO performs poorly on this task because it maximizes an upper bound on performance; and confirm this in the following experiments.

\begin{figure}[t]
    \centering
    \vspace{-2em}
    \begin{subfigure}[b]{0.54\linewidth}
        \centering
      \begin{subfigure}[c]{0.65\linewidth}
        \centering
          \includegraphics[width=\linewidth]{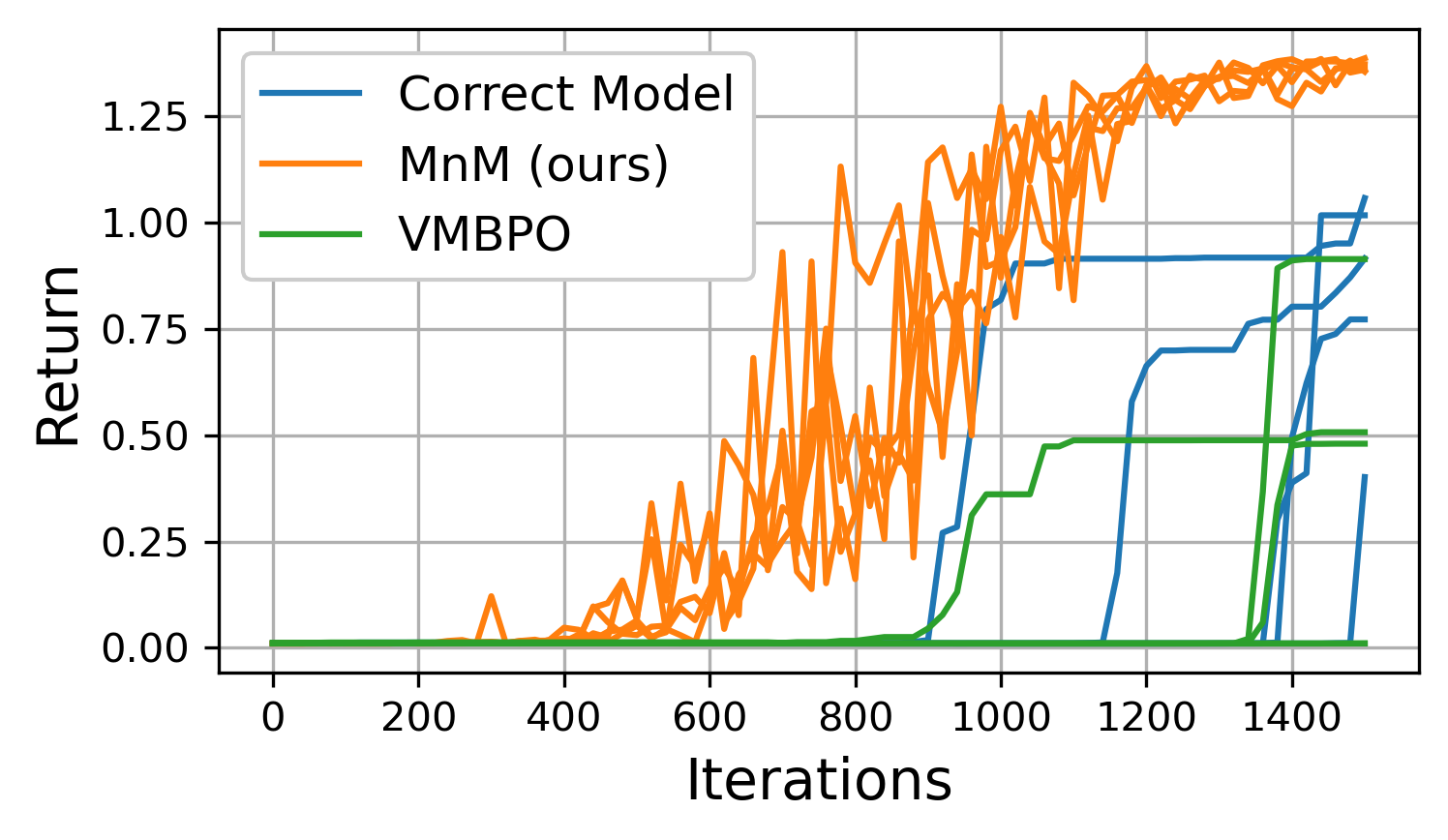}%
      \end{subfigure}%
      ~
      \begin{subfigure}[c]{0.35\linewidth}
        \vspace{-1em}
          \includegraphics[width=\linewidth]{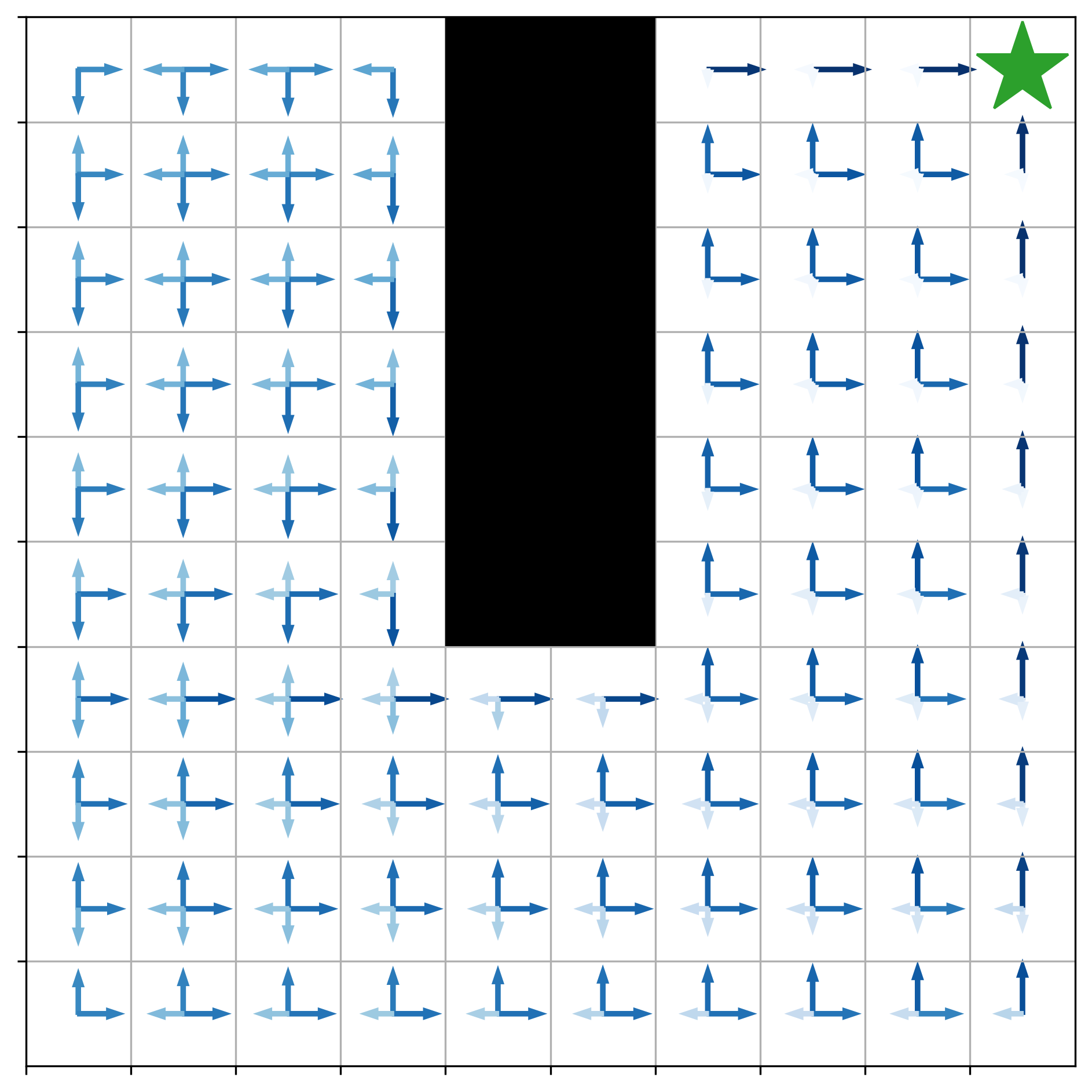}
      \end{subfigure}
      \vspace{-1em}
      \caption{{\footnotesize Stochastic Gridworld}} \label{fig:stochastic-gridworld}
  \end{subfigure}
  \hfill
  \begin{subfigure}[b]{0.44\linewidth}
        \centering
        \includegraphics[width=\linewidth]{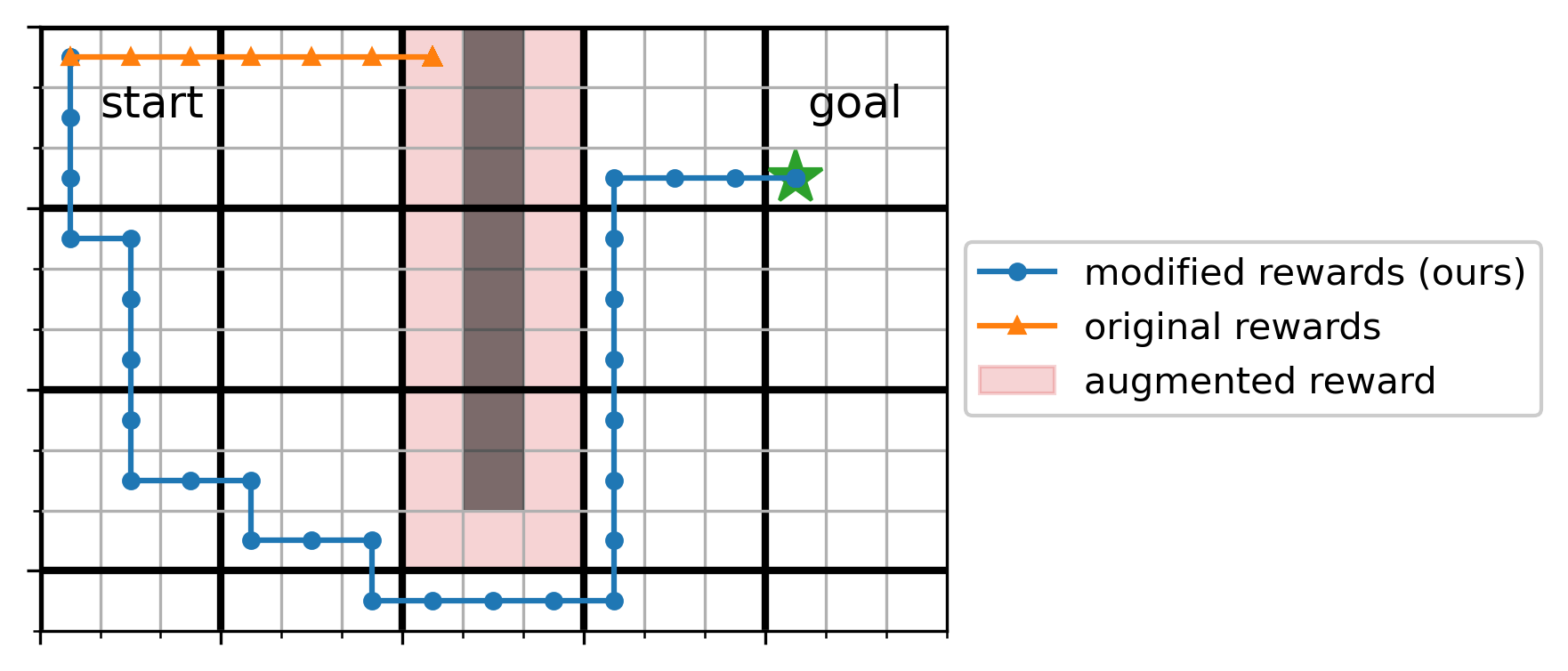}
        \vspace{-1.em}
        \caption{\footnotesize Inaccurate Models} 
        \label{fig:aliasing}
      \end{subfigure}%
      \caption{\footnotesize \textbf{Gridworld experiments.}
      \figleft \,  We apply MnM to a navigation task with transition noise that moves the agent to neighboring states with equal probability. MnM solves this task more quickly than Q-learning and VMBPO. The dynamics learned by MnM are different from the real dynamics, changing the transition noise (blue arrows) to point towards the goal.
      \figright \, We simulate function approximation by a learning model that makes the same predictions for groups of $3 \times 3$ states, resulting in a model that is inaccurate around obstacles. The modified reward compensates for these errors by penalizing the policy for navigating near obstacles.}
\end{figure}

We hypothesize that MnM outperforms Q-learning with the correct model because it learns ``optimistic'' dynamics. We test this hypothesis by visualizing the dynamics model learned by MnM (Fig.~\ref{fig:stochastic-gridworld} \figleft). While the true environment dynamics have stochasticity that moves the agent in a random direction with \emph{equal} probability, the MnM dynamics model biases this stochasticity to lead the agent towards the goal (blue arrows point towards the goal). Of course, we use the true environment dynamics, not the optimistic dynamics model, for evaluating the policies.

Our augmented reward function contains two crucial components, \emph{(1)} the classifier term and \emph{(2)} the logarithmic transformation of the reward function. We test the importance of the classifier term in correcting for inaccurate models. To do this, we limit the capacity of the MnM dynamics model so that it makes ``low-resolution'' predictions, forcing all states in $3 \times 3$ blocks to have the same dynamics. We will use the gridworld shown in Fig.~\ref{fig:aliasing} \figleft, which contains obstacles that occur at a finer resolution than the model can detect.
When the ``low resolution'' dynamics model makes predictions for states near the obstacle, it will average together some states with obstacles and some states without obstacles. Thus, the model will (incorrectly) predict that the agent always has some probability of moving in each direction, even if that direction is actually blocked by an obstacle. However, the classifier (whose capacity we have also limited) detects that the dynamics model is inaccurate in these states, so the augmented reward is much lower at these states. Thus, MnM is able to solve this task despite the inaccurate model; an ablation of MnM that removes the classifier term  attempts to navigate through the wall and fails to reach the goal.

\begin{figure}[t]
    \centering
    \begin{subfigure}[b]{0.55\textwidth}
        \centering
        \begin{subfigure}[c]{0.4\linewidth}
            \includegraphics[width=\linewidth]{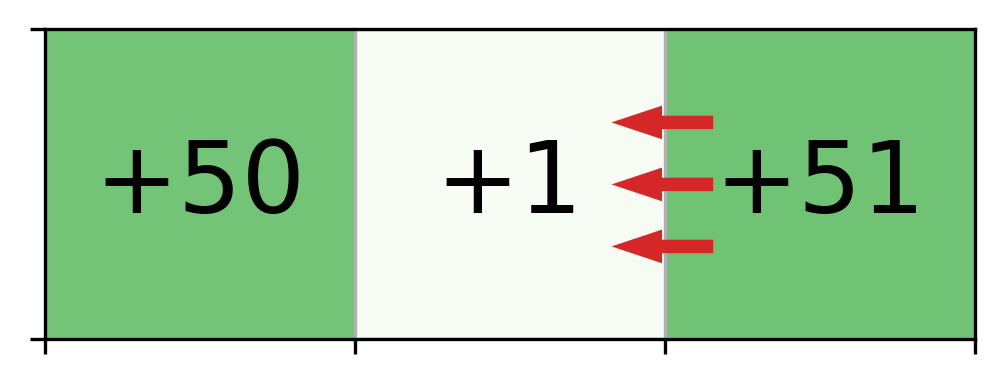}
        \end{subfigure}%
        ~
        \begin{subfigure}[c]{0.6\linewidth}
            \includegraphics[width=\linewidth]{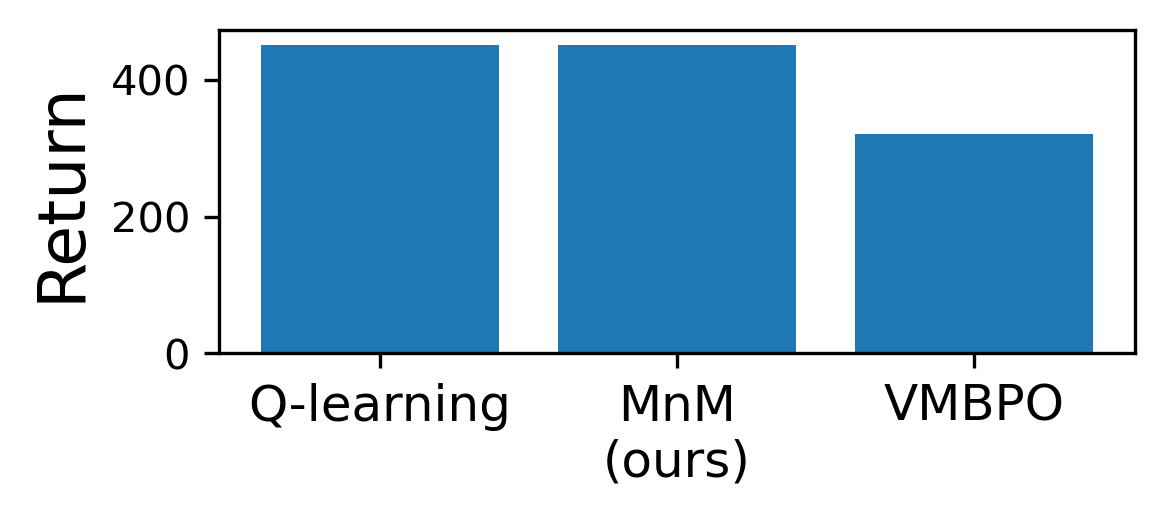}
        \end{subfigure}
        \vspace{-1em}
        \caption{\footnotesize Testing for risk seeking behavior} \label{fig:gridworld-line}
    \end{subfigure}
    \hfill
    \begin{subfigure}[b]{0.4\textwidth}
        \centering
       \includegraphics[width=\linewidth]{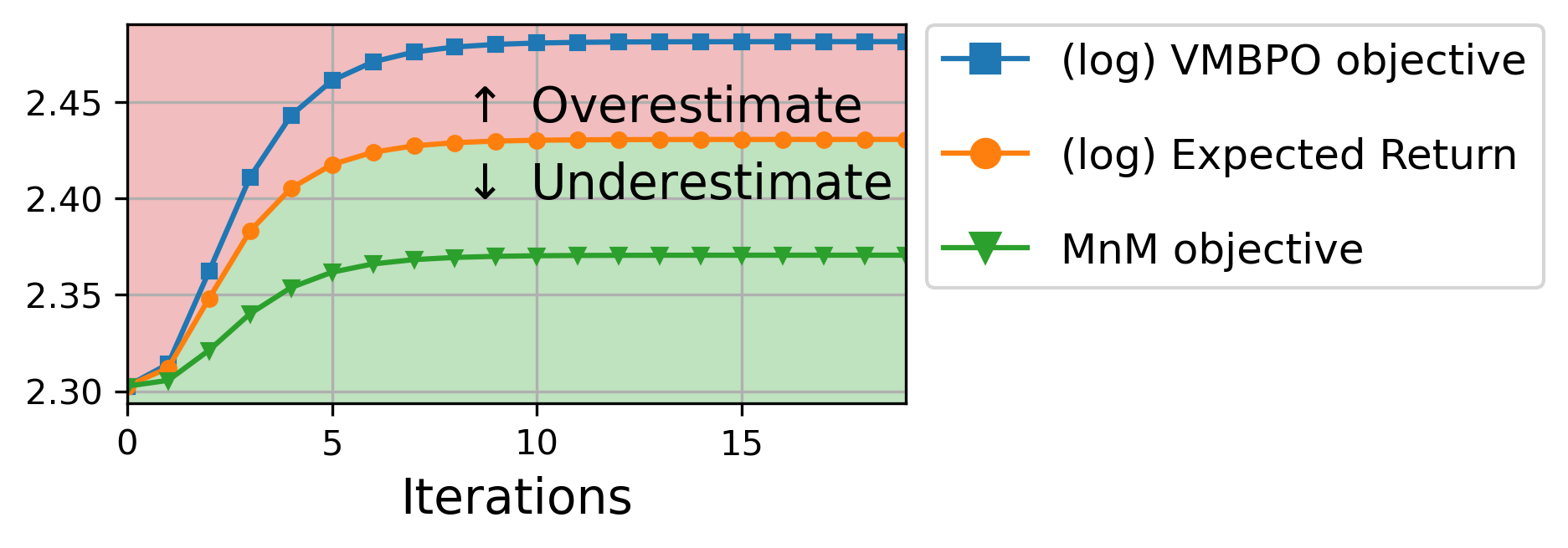}
        \vspace{-1em}
        \caption{\footnotesize Comparing objectives.} \label{fig:lower-bounds} 
    \end{subfigure}
    \caption{\footnotesize \textbf{Analyzing the lower bound.}
    \figleft \, On a 3-state MDP with stochastic transitions in one state (red arrows), MnM learns the reward-maximizing policy while VMBPO learns a strategy with lower rewards and higher variance (as predicted by theory).
    \figright \, We apply value iteration to the gridworld from Fig.~\ref{fig:stochastic-gridworld} to analytically compute various objectives. As predicted by our theory, the MnM objective is a lower bound on the expected return, whereas the VMBPO objective overestimates the expected return.}
\end{figure}

Like MnM, VMBPO includes a classifier term in the reward function but omits the logarithmic transformation, a difference we expect to cause VMBPO to prefer suboptimal, risk-seeking policies. To test this hypothesis, we use the 3-state MDP in Fig.~\ref{fig:gridworld-line} \figright, where numbers indicate the reward at each state. While moving to the right state yields slightly higher rewards, ``wind'' knocks the agent out of this state with probability 50\% so the reward-maximizing strategy is to move to the left state. While MnM learns the reward-maximizing strategy, VMBPO learns a policy that goes to the right state and receives lower returns. %

Finally, we verify Theorem~\ref{lemma:main-result} by comparing the MnM objective to the true expected return. We also compare to the objective from VMBPO, which looks similar to the MnM objective but omits the logarithmic transformation; our theory predicts that the VMBPO objective will therefore be an \emph{upper} bound on the expected return (see Appendix~\ref{appendix:vmbpo}).
We use the gridworld from Fig.~\ref{fig:stochastic-gridworld} and use a version of MnM based on value iteration to avoid approximation error.
Plotting the MnM objective in Fig.~\ref{fig:lower-bounds} \figright, we observe that it is always a lower bound on the (log) expected return, as predicted by our theory. Also as predicted by the theory, the VMBPO objective overestimates the expected return, illustrating the importance of the logarithmic transformation.

Of the oft-cited benefits of model-based RL is that the learned dynamics model can be re-used to solve new tasks. MnM presents a twist on that story, because MnM does not learn the true environment dynamics but rather learns optimistic dynamics (see Sec.~\ref{sec:lower-bound}). In Appendix~\ref{appendix:additional-experiments} (Fig.~\ref{fig:dynamics-transfer}), we examine MnM's effectiveness at transferring dynamics to different tasks in the stochastic gridworld from Fig.~\ref{fig:stochastic-gridworld}. We find that the (optimistic) dynamics learned by MnM do not slow learning on dissimilar tasks, but can accelerate learning of challenging, similar tasks.

\subsection{Comparisons On Higher-Dimensional Tasks}
\label{sec:comparisons}

Our next experiments use continuous-control robotic tasks to study whether the new model-learning objective suggested by our lower bound can be stably applied to higher-dimensional control tasks. While the MnM-approx method tested in this section deviates from our theoretical derivation (see Sec.~\ref{sec:exploration}), these experiments are nonetheless useful for providing preliminary evidence about whether the proposed model objective can be efficiently estimated and stably optimized. 

We use MBPO~\citep{janner2019trust} as a baseline for model-based RL because it achieves state-of-the-art results and is a prototypical example of model-based RL algorithms that use maximum likelihood models. Because our method differs from the baseline (MBPO) along only one dimension now (new model update, same policy update), these experiments will directly test the utility of our proposed model update. Experimental details are in Appendix~\ref{appendix:details}.

We first use three locomotion tasks from the OpenAI Gym benchmark~\citep{brockman2016openai} to compare MnM-approx to MBPO and VMBPO. The VMBPO curves are taken directly from that paper. As shown in Fig.~\ref{fig:mujoco}, MnM-approx performs roughly on par with MBPO and outperforms~\citep{chow2020variational}, a more recent model-based method also addresses the objective mismatch problem by maximizing an \emph{upper} bound on expected returns (1-sided p-values: $p = 0.04, 0.00, 0.03$).

We next use the ROBEL manipulation benchmark~\citep{ahn2020robel} to compare how MnM-approx and MBPO handle tasks with more complicated dynamics.
As shown in Fig.~\ref{fig:dclaw}, MBPO struggles to learn three of the four tasks, likely because the dynamics are hard. In contrast, MnM-approx solves these tasks, likely because the GAN-like model is more accurate. MnM-approx outperforms MBPO on all tasks ($p \le 0.03$) and outperforms the model-free baseline on 2/4 tasks ($p \le 0.02$)

We next compare to prior methods for addressing the objective mismatch problem using the \texttt{DClawScrewFixed-v0} task. We compare to VAML~\citep{farahmand2017value} and the value-weighted maximum likelihood approach proposed in~\citet{lambert2020objective}. As shown in Fig.~\ref{fig:model-ablation}, these alternative model learning objectives perform poorly on this task. %

\begin{figure}[t]
    \centering
    \vspace{-1em}
    \begin{subfigure}[b]{0.45\textwidth}
        \centering
        \includegraphics[width=\linewidth]{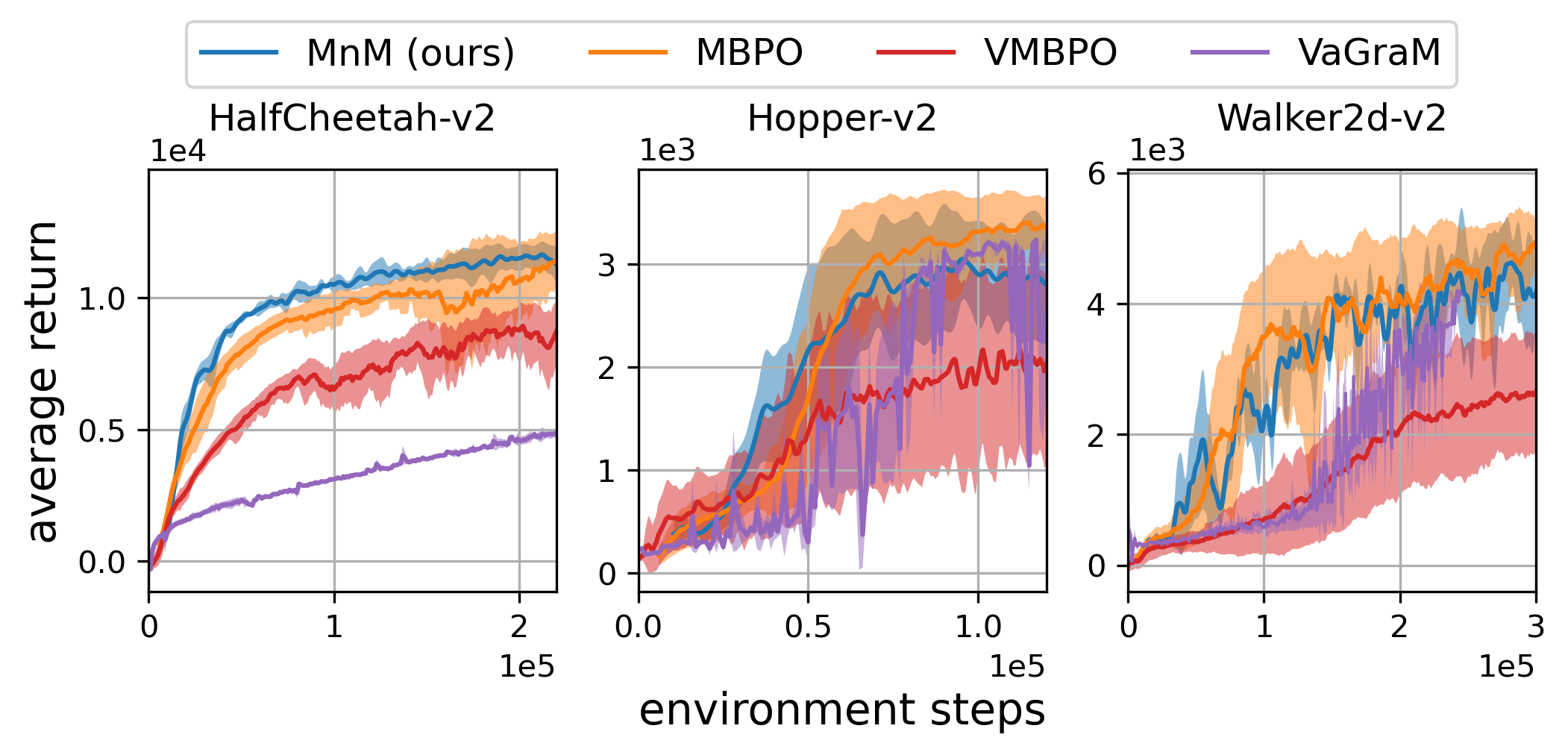}
        \caption{\footnotesize OpenAI Gym benchmark}
        \label{fig:mujoco} 
    \end{subfigure}%
    ~
    \begin{subfigure}[b]{0.55\textwidth}
        \centering
        \includegraphics[width=\linewidth]{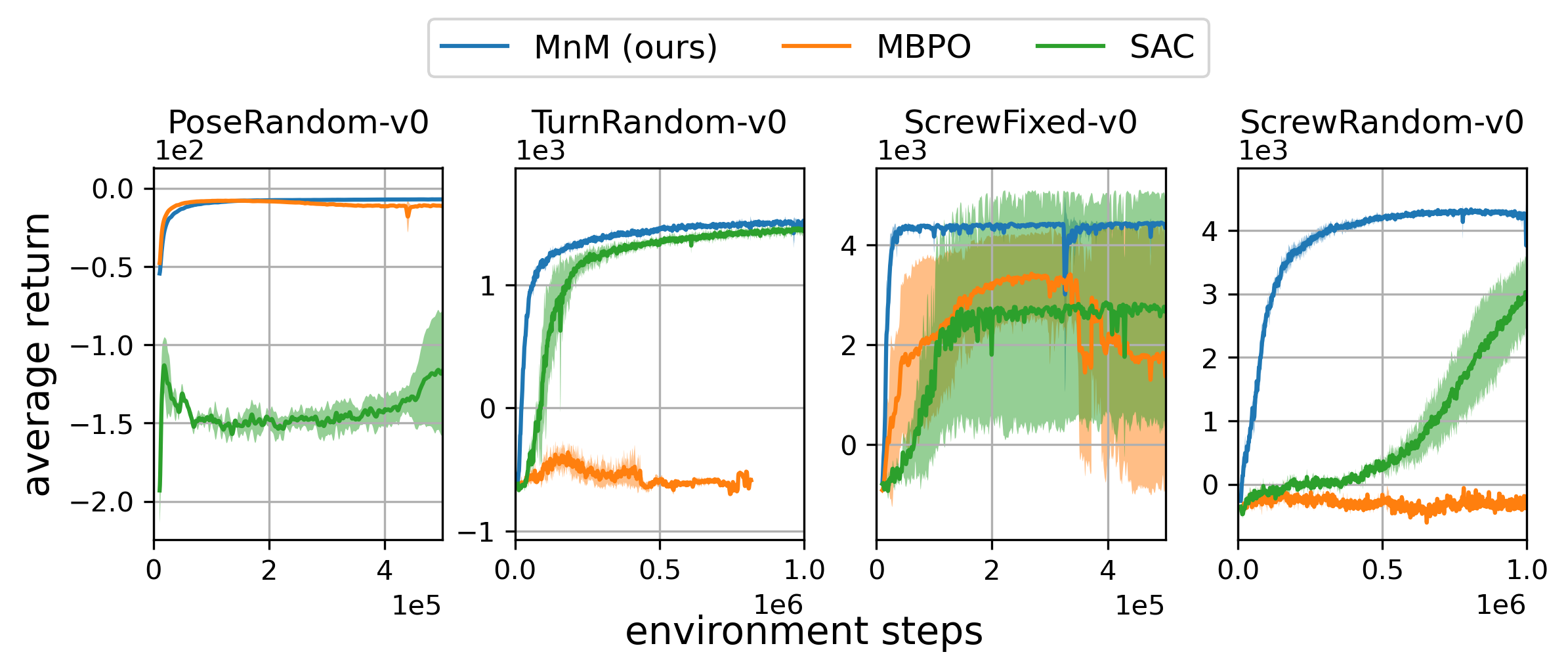}
        \caption{\footnotesize  ROBEL manipulation benchmark} \label{fig:dclaw}  
    \end{subfigure}
    \caption{\footnotesize \textbf{Comparison on two benchmarks}.
    \figleft \, On the OpenAI gym benchmark~\citep{brockman2016openai}, MnM-approx performs on par with a prior state-of-the-art method (MBPO~\citep{janner2019trust}), while consistently outperforming a recent method that addresses objective mismatch (VMBPO~\citep{chow2020variational}).
    \figright \, The ROBEL manipulation benchmark~\citep{ahn2020robel} contains complex contact dynamics that are challenging to model. MBPO performs poorly on these tasks, often worse than model-free SAC~\citep{haarnoja2018sac}. %
    }
\end{figure}

\begin{figure}[t]
    \centering
    \begin{subfigure}[b]{0.3\textwidth}
    \centering
    \includegraphics[width=0.8\linewidth]{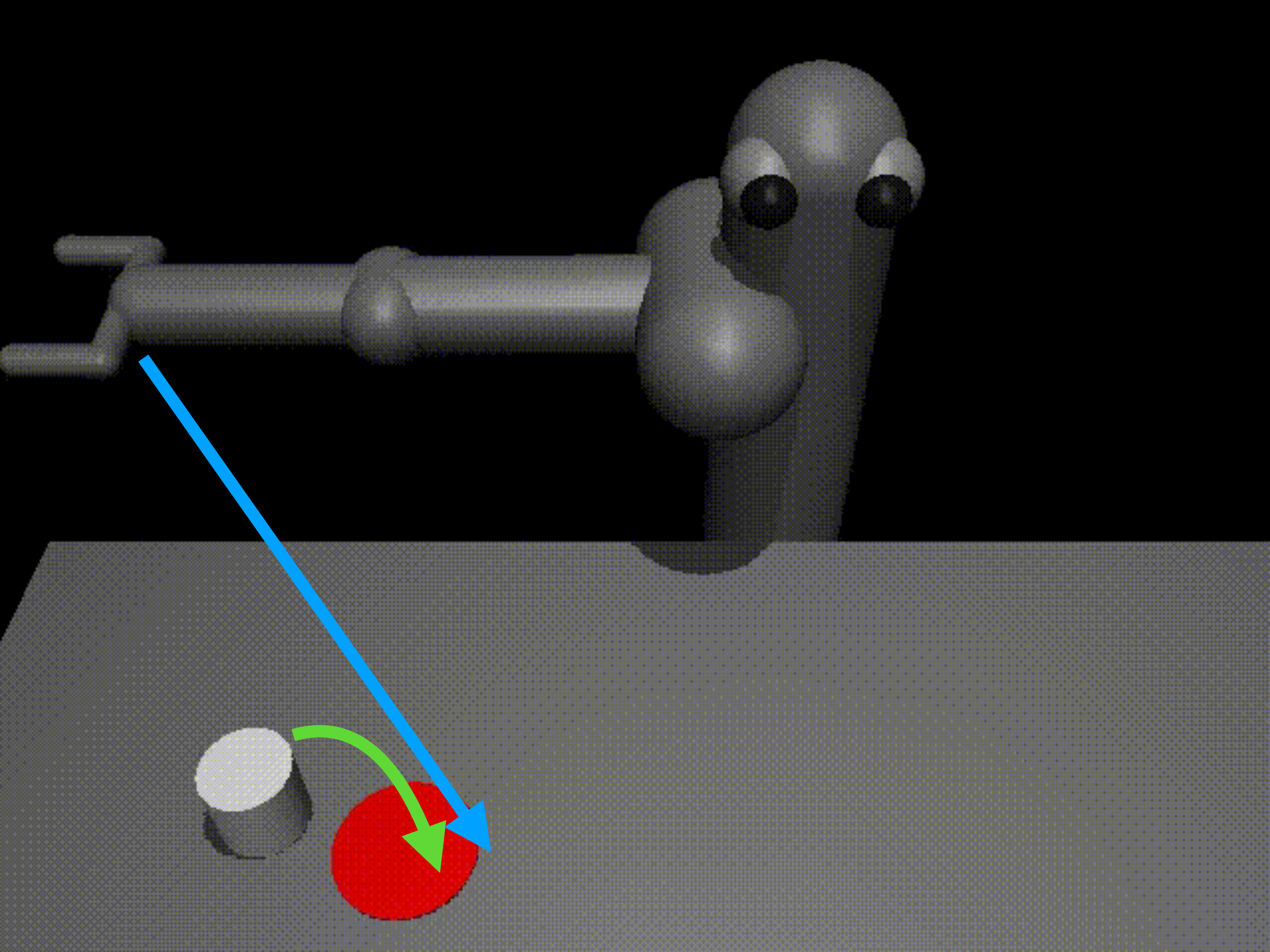}
    \end{subfigure}
    \hspace{1.5em}
    \begin{subfigure}[b]{0.6\textwidth}
    \centering
    \includegraphics[width=0.8\linewidth]{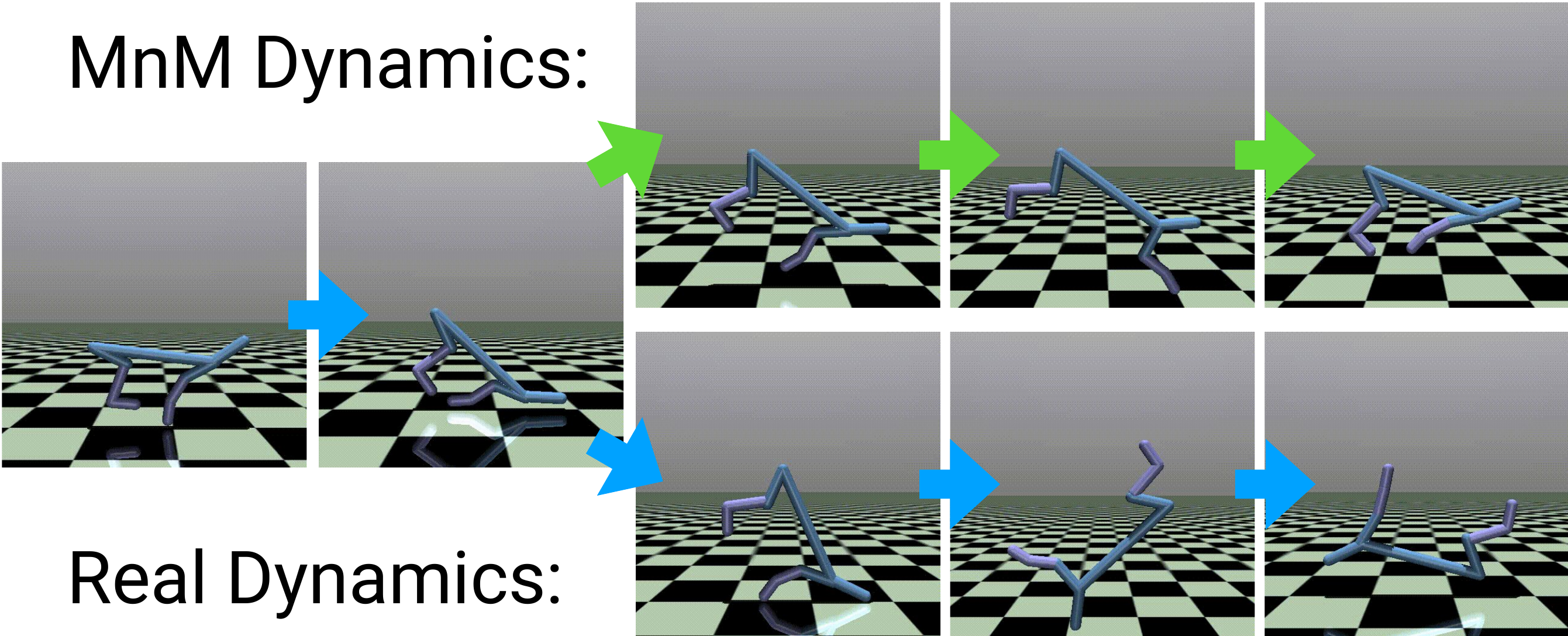}
    \end{subfigure}
    \caption{{\footnotesize \textbf{Optimistic Dynamics}:
    \figleft \,  On the \texttt{Pusher-v2} task, the MnM dynamics model makes the puck move towards the puck move towards the gripper before being grasped.
    \figright \,  On the \texttt{HalfCheetah-v2} task, the MnM dynamics model helps the agent stay upright after tripping.}}
    \label{fig:optimistic-dynamics}
    \vspace{-1.5em}
\end{figure}

\begin{wrapfigure}[14]{R}{0.5\textwidth}
    \centering
    \vspace{-1em}
    \includegraphics[width=\linewidth]{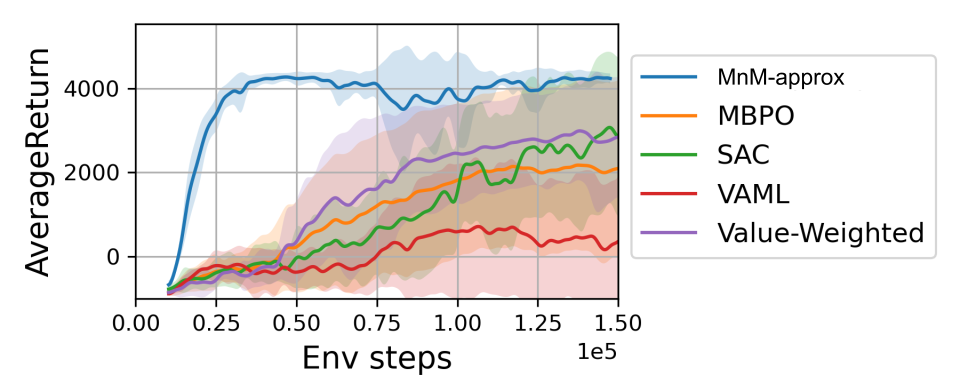}
    \vspace{-1em}
    \caption{\footnotesize\textbf{Alternative model learning objectives}: Using the \texttt{DClawScrewFixed-v0} task, we compare MnM-approx and MBPO~\citep{janner2019trust} to two additional model learning objectives suggested in the literature, VAML~\citep{farahmand2017value} and value-weighted maximum likelihood~\citep{lambert2020objective}. MnM-approx outperforms these alternative approaches.} \label{fig:model-ablation}
\end{wrapfigure}

To better understand why MnM-approx sometimes outperforms the maximum likelihood baseline (MBPO), we visualized the Q-values throughout training. We used \texttt{metaworld-drawer-open-v2}, a task where we found a noticeable difference in the performance between MBPO and MnM-approx.
Fig.~\ref{fig:q-values} shows that MnM-approx yields Q values that are more accurate and more stable than MBPO, perhaps because MBPO learns a policy that exploits inaccuracies in the learned model.

\begin{figure}[t]
            \vspace{-1em}
    \centering
    \begin{subfigure}[b]{0.45\textwidth}
        \centering
        \includegraphics[width=\linewidth]{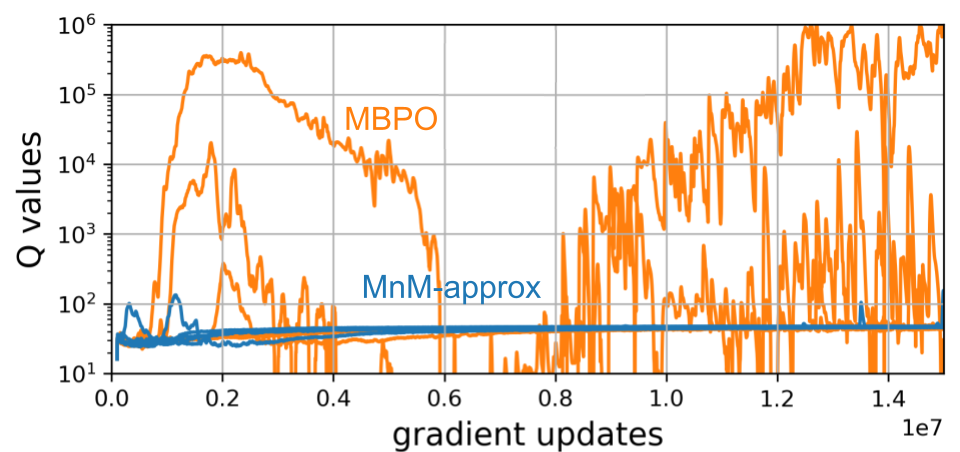}
        \caption{\footnotesize Model exploitation}
        \label{fig:q-values}      
    \end{subfigure}%
    ~
    \begin{subfigure}[b]{0.45\textwidth}
        \centering
        \includegraphics[width=\linewidth]{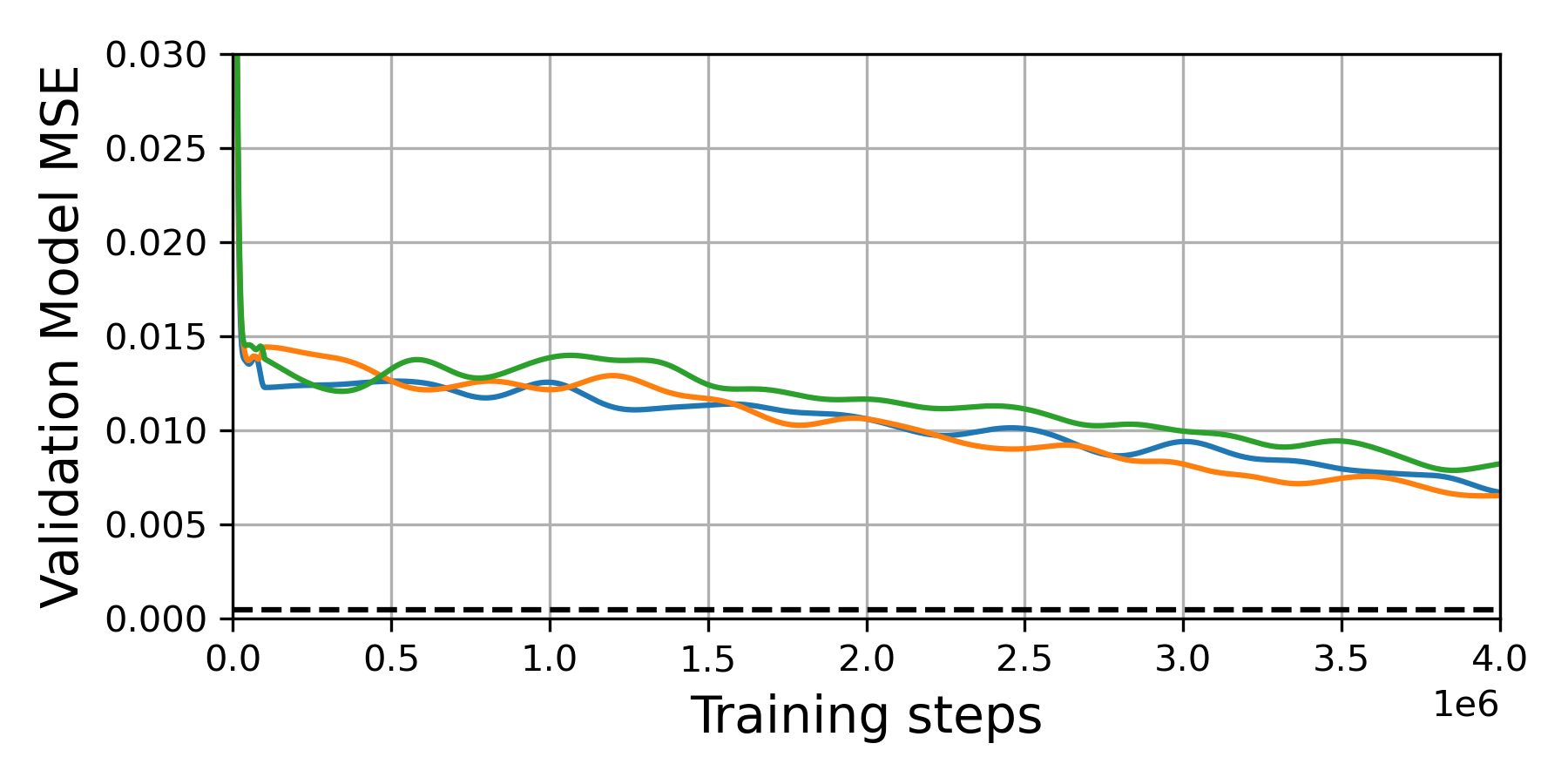}
        \caption{\footnotesize MnM-approx trains stably.}
        \label{fig:gan-stability} 
    \end{subfigure}
    \caption{\footnotesize \textbf{Analyzing MnM-approx.}
            \figleft \; The very large Q values of MBPO suggest model exploitation, which our method appears to avoid.
            \figright \;  Despite resembling a GAN, the MnM-approx dynamics model trains stably. Different colors correspond to different random seeds, and the dashed line corresponds to the minimum validation MSE of an MLE dynamics model.}
        \vspace{-1.em}
\end{figure}

To study the stability of MnM-approx, we plot the validation MSE of the MnM-approx model when training on the \texttt{DClawScrewRandom-v0} task.
As shown in Fig.~\ref{fig:gan-stability}, the MSE decreases stably, indicating that the adversarial nature of the MnM-approx model training does not create instabilities. In Appendix~\ref{appendix:additional-experiments} (Fig.~\ref{fig:model-mse}, ~\ref{fig:q-values-more}) we visualize Q value stability and MSE model error on additional environments. In Appendix~\ref{appendix:additional-experiments} (Fig.~\ref{fig:model-horizon}), we study the effect of model horizon on MnM in a continuous control task, and find that performance degrades linearly as we increase this value.

We visualize the dynamics learned by MnM on two robotic control tasks in Fig~\ref{fig:optimistic-dynamics}. These tasks have deterministic dynamics, so our theory would predict that an idealized version of MnM would learn a dynamics model exactly equal to the deterministic dynamics. However, our implementation relies on function approximation (neural networks) to learn the dynamics, and the limited capacity of function approximators makes otherwise-deterministic dynamics appear stochastic. On the \texttt{Pusher-v2} task, the MnM dynamics cause the puck to move towards the robot arm even before the arm has come in contact with the puck. While this movement is not physically realistic, it may make the exploration problem easier. On the \texttt{HalfCheetah-v2} task, the MnM dynamics increase the probability that the agent remains upright after tripping, likely making it easier for the agent to learn how to run. We expect that the implicit stochasticity caused by function approximation to be especially important for real-world tasks, where the complexity of the real dynamics often dwarfs the capacity of the learned dynamics model.

\section{Conclusion}
\label{sec:conclusion}

The main contribution of this paper is an objective for model-based RL that is both efficient to compute and globally valid. Moreover, the model and policy are jointly optimized using this same objective, thereby addressing the objective mismatch problem. This \emph{joint optimization} will ease and accelerate the design of future model-based RL algorithms.

The main limitation of this paper is the classifier term. Because this term is estimated, the objective used in practice can fail to be a lower bound when the classifier is not Bayes-optimal. Additionally, this classifier term can hinder exploration and degrade performance on continuous control tasks in the online setting. We encourage future investigations into \emph{joint optimization} based approaches that do explicitly address the role of exploration, perhaps including the dataset as an additional optimization variable.

{\footnotesize
\vspace{2em}
\paragraph{Acknowledgements.}
We thank Laura Smith, Marvin Zhang, and Dibya Ghosh for helpful discussions, and thank Anusha Nagabandi and Michael Janner for help with experiments. We thank Yinlam Chow for help sharing the VMBPO baselines and thank Danijar Hafner for reviewing a draft of the paper. This material is supported by the Fannie and John Hertz Foundation and the NSF GRFP (DGE1745016).

}

\clearpage
\appendix

\section{Tightening the lower bound}
\label{appendix:tight}
We now introduce a modification to our lower bound that does make the bound tight. This new lower bound will be more complex than the one introduced above and we have not yet successfully designed an algorithm for maximizing it. Nonetheless, we believe that presenting the bound may prove useful for the design of future model-based RL algorithms.

We will use $\gL_\gamma(\theta)$ to denote this new lower bound. In addition to the policy and dynamics, this bound will also depend on a time-varying discount, $\gamma_\theta(t)$, in place of the typical $\gamma^t$ term.  Similar learned discount factors have been studied in previous work on model-free RL~\citep{rudner2021outcome}. We define this objective as follows:
\begin{align}
    \gL_\gamma(\theta) \triangleq \E_{q^{\pi_\theta}(\tau)} \bigg[\sum_{t=0}^\infty \gamma_\theta(t) \tilde{r}_\gamma(s_t, a_t, s_{t+1}) \bigg], \label{eq:lb2}
\end{align}
where the augmented reward is now defined as
\footnotesize \begin{align*}
    \tilde{r}_\gamma(s_t, &a_t, s_{t+1}) \triangleq  \log r(s_t, a_t) + \log \left( \frac{\gamma^t}{\gamma_\theta(t)} \right)\\
    & + \frac{1 - \Gamma_\theta(t-1)}{\gamma_\theta(t)} \log \left( \frac{p(s_{t+1} \mid s_t, a_t, s_{t-1}, a_{t-1}, \cdots)}{q_\theta(s_{t+1} \mid s_t, a_t, s_{t-1}, a_{t-1}, \cdots)} \right),
\end{align*} \normalsize
and $\Gamma_\theta(t) = \sum_{t'=0}^t \gamma_\theta(t')$ is the CDF of the learned discount function (i.e., $\gamma_\theta(t)$ is a probability distribution over $t$.). This new lower bound, which differs from our main lower bound by the learnable discount factor, does provide a tight bound on the expected return objective:
\begin{lemma} \label{lemma:tight}
Let an arbitrary policy $\pi(a_t \mid s_t)$ be given. The objective $\gL_\gamma(\theta)$ is also a lower bound on the expected return objective, $\log \E_{\pi}\left[\sum_{t=0}^\infty \gamma^t r(s_t, a_t) \right] \ge \gL_\gamma(\theta)$, and this bound becomes tight at optimality:
\footnotesize \begin{equation*}
    \log \E_{\pi}\bigg[\sum_{t=0}^\infty \gamma^t r(s_t, a_t) \bigg] =  \max_{q^\pi(\tau), \gamma_\theta(t)} \gL_\gamma(\theta).
\end{equation*}\normalsize
\end{lemma}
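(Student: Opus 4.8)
The plan is to derive both parts from the same evidence-lower-bound (ELBO) machinery used for Theorem~\ref{lemma:main-result}, but with a variational family that is flexible enough to contain the exact posterior. First I would recast the objective as inference in a latent-variable model with a \emph{random horizon}: let $T$ be a latent horizon with geometric prior $\mathrm{Geom}(T;1-\gamma)=(1-\gamma)\gamma^T$, let the latent trajectory be the length-$T$ prefix $\tau_{0:T}$ drawn from $p^\pi$, and interpret the positive reward as an ``optimality'' likelihood $p(\mathcal{O}{=}1\mid\tau_{0:T},T)\propto r(s_T,a_T)$. Marginalizing $T$ recovers the discounted return: the evidence is $p(\mathcal{O}{=}1) = (1-\gamma)\,\E_\pi\!\big[\sum_t \gamma^t r(s_t,a_t)\big]$, so $\log\E_\pi\!\big[\sum_t\gamma^t r(s_t,a_t)\big] = \log p(\mathcal{O}{=}1) - \log(1-\gamma)$. (As a sanity check, the posterior marginal over the full trajectory is $\propto p^\pi(\tau)R(\tau)$, matching the stated optimum $q^\star(\tau)=p(\tau)R(\tau)/\!\int p(\tau')R(\tau')d\tau'$.)

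For the lower-bound claim I would apply Jensen's inequality to $\log p(\mathcal{O}{=}1)$ with the variational distribution $\rho(T,\tau_{0:T}) = \gamma_\theta(T)\,q^{\pi_\theta}(\tau_{0:T})$, where $\gamma_\theta$ is the learnable discount (a probability distribution over $t$) and $q^{\pi_\theta}$ uses the history-dependent kernel $q_\theta(s_{t+1}\mid s_t,a_t,s_{t-1},a_{t-1},\dots)$ with the fixed policy. Expanding $\log\frac{p^\pi(\tau_{0:T})\,\mathrm{Geom}(T)\,r(s_T,a_T)}{\rho(T,\tau_{0:T})}$: the initial-state and policy factors cancel; the ratio $\log\frac{(1-\gamma)\gamma^T}{\gamma_\theta(T)}$ splits off a $\log(1-\gamma)$ that cancels the one above and leaves $\log\frac{\gamma^T}{\gamma_\theta(T)}$; and the transition log-ratios $\log\frac{p(s_{t+1}\mid\cdots)}{q_\theta(s_{t+1}\mid\cdots)}$ appear only for $t\le T$, so swapping the order of summation gives $\E_{T\sim\gamma_\theta}\big[\sum_{t\le T}(\cdot)\big] = \sum_t \big(1-\Gamma_\theta(t-1)\big)(\cdot)$, which is exactly the $\tfrac{1-\Gamma_\theta(t-1)}{\gamma_\theta(t)}$ weighting that sits inside $\tilde r_\gamma$ once the outer $\gamma_\theta(t)$ is factored out. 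Collecting terms reproduces $\gL_\gamma(\theta)$, giving $\log\E_\pi[\sum_t\gamma^t r]\ge\gL_\gamma(\theta)$; specializing $\gamma_\theta(t)=(1-\gamma)\gamma^t$ collapses $\gL_\gamma$ to $\gL$ of Eq.~\eqref{eq:lb}, so this step is essentially the bookkeeping of the proof of Theorem~\ref{lemma:main-result} carried out with the extra free parameter.

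For tightness, the key fact is that the Jensen gap is $\KL\!\big(\rho \,\|\, p(\cdot\mid\mathcal{O}{=}1)\big)$, so $\max_{q^\pi,\gamma_\theta}\gL_\gamma(\theta) = \log p(\mathcal{O}{=}1)-\log(1-\gamma)$ holds iff the posterior $p(T,\tau_{0:T}\mid\mathcal{O}{=}1)\propto \gamma^T\,p^\pi(\tau_{0:T})\,r(s_T,a_T)$ lies in (the closure of) the variational family. I would verify this by explicit construction: the posterior horizon marginal $\gamma_\theta^\star(t)\propto\gamma^t\,\E_{p^\pi}[r(s_t,a_t)]$ is an admissible learnable discount, and, conditioned on $T$, the reward-reweighting makes the posterior over $\tau_{0:T}$ non-Markov, but the chain rule expresses it with \emph{history-dependent} transition kernels $q^\star(s_{t+1}\mid s_{\le t},a_{\le t})$ --- which is precisely why the tight bound must condition on the full history rather than on $(s_t,a_t)$ alone. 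Substituting $\rho=$ posterior kills the KL term and yields equality (with the supremum attained). I expect this last step to be the main obstacle: one must check carefully that the non-Markov, reward-reweighted posterior is genuinely captured by the chosen family --- in particular reconciling the fact that conditioning on optimality also tilts the action distribution with the family's use of the fixed policy $\pi$ --- and must handle the measure-theoretic subtleties of conditioning in continuous state--action spaces (existence and regularity of the chain-rule factorization, well-definedness of all densities, and finiteness of the relevant KL divergences, which rely on $r>0$ and $\gamma<1$).
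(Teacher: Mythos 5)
Your setup and the lower-bound half are essentially the paper's argument: the geometric-horizon latent-variable model, the evidence identity $\log p(\mathcal{O}{=}1)=\log(1-\gamma)+\log\E_\pi[\sum_t\gamma^t r(s_t,a_t)]$, Jensen's inequality with a variational joint over $(T,\tau)$, and the resummation that turns $\E_{T\sim\gamma_\theta}[\sum_{t\le T}(\cdot)]$ into the $(1-\Gamma_\theta(t-1))$ weights all match Appendix~\ref{proof:lemma-tight} step for step.

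The gap is in the tightness construction. You factor the posterior as (horizon marginal) $\times$ (trajectory given horizon) and propose $\gamma_\theta^\star(t)\propto\gamma^t\,\E_{p^\pi}[r(s_t,a_t)]$ together with a single family of history-dependent kernels $q^\star(s_{t+1}\mid s_{\le t},a_{\le t})$. But the posterior conditional $p(\tau_{0:T}\mid T,\mathcal{O}{=}1)\propto p^\pi(\tau_{0:T})\,r(s_T,a_T)$ is tilted by the reward \emph{at time $T$}, so its chain-rule kernels necessarily depend on $T$; a $T$-independent kernel family cannot represent all of them simultaneously, and the consistency requirement $\int q(\tau_{0:T+1})\,ds_{T+1}\,da_{T+1}=q(\tau_{0:T})$ fails for the reward-tilted prefixes. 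Equivalently, under the posterior $T$ and $\tau$ are dependent, so any product-form $\rho(T,\tau)=\gamma_\theta(T)\,q(\tau_{0:T})$ leaves $\KL\left(\rho\,\|\,p(\cdot\mid\mathcal{O}{=}1)\right)>0$ in general and the bound does not close. The paper avoids this by factoring the other way: it keeps one distribution $q_\theta(\tau)$ over infinite trajectories and makes the learned discount \emph{trajectory-conditional}, $\gamma_\theta(H\mid\tau)$. Optimizing the discount first in closed form gives $\gamma_\theta^*(H\mid\tau)=p(H)r(s_H,a_H)/\left((1-\gamma)R(\tau)\right)$, which collapses the objective to $\int q_\theta(\tau)\left(\log p(\tau)-\log q_\theta(\tau)+\log R(\tau)\right)d\tau$; this is then maximized by $q^*(\tau)\propto p(\tau)R(\tau)$, and substitution yields equality (the product $q^*(\tau)\,\gamma_\theta^*(H\mid\tau)\propto p(\tau)p(H)r(s_H,a_H)$ is exactly the posterior). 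Your closing worry about the fixed policy factor is legitimate --- $q^*(\tau)\propto p(\tau)R(\tau)$ also tilts the action conditionals away from $\pi$ --- but that looseness is shared by the paper's own proof, which simply optimizes over unconstrained trajectory distributions; it is not something your factorization would have repaired.
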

The proof is presented in Appendix~\ref{proof:lemma-tight}.
One important limitation of this result is that the learned dynamics that maximize this lower bound to make the bound tight may be non-Markovian.
Intriguingly, this analysis suggests that using non-Markovian models, such as RNNs and transformers, may accelerate learning on Markovian tasks. This paper does not propose an algorithm for optimizing this more complex lower bound.

\section{Proofs and Additional Analysis}
\label{appendix:analysis}

\subsection{VMBPO Maximizes an Upper Bound on Return}
\label{appendix:vmbpo}
While MnM aims to maximize the (log) of the expected return, VMBPO aims to maximize the expected \emph{exponentiated} return:
\begin{equation*}
    \text{MnM:} \quad \log \E_{\pi}\left[\sum_{t=0}^\infty \gamma^t r(s_t, a_t) \right], \qquad \text{VMBPO:} \quad \log \E_{\pi}\left[e^{\eta \sum_{t=0}^\infty \gamma^t r(s_t, a_t)} \right],
\end{equation*}
where $\eta > 0$ is a temperature term used by VMBPO. Note that maximizing the log of the expected return, as done by MnM, is equivalent to maximizing the expected return, as the function $\log(\cdot)$ is monotone increasing. However, maximizing the log of the expected \emph{exponentiated} return, as done by VMBPO, is not equivalent to maximizing the expected return. Rather, it corresponds to maximizing a sum of the expected return and the \emph{variance} of the return~\citep[Page 272]{mihatsch2002risk}:
\begin{equation*}
    \frac{1}{\eta} \log \E_{\pi}\left[e^{\eta \sum_{t=0}^\infty \gamma^t r(s_t, a_t)} \right] = \E_{\pi}\left[\sum_{t=0}^\infty \gamma^t r(s_t, a_t) \right] + \frac{\eta}{2} \Var_{\pi}\left[\sum_{t=0}^\infty \gamma^t r(s_t, a_t) \right] + \gO(\eta^2).
\end{equation*}
Thus, in environments with stochastic dynamics or rewards (e.g., the didactic example in Fig.~\ref{fig:gridworld-line}), VMBPO will prefer to receive lower returns if the variance of the returns is much higher. We note that the expected \emph{exponentiated} return is an \emph{upper} bound on the expected return:
\begin{equation*}
    \log \E_{\pi}\left[e^{\eta \sum_{t=0}^\infty \gamma^t r(s_t, a_t)} \right] \ge \eta \E_{\pi}\left[\sum_{t=0}^\infty \gamma^t r(s_t, a_t) \right].
\end{equation*}
This statement is a direct application of Jensen's inequality. The bound holds with a strict inequality in almost all MDPs. The one exception is trivial MDPs where all trajectories have exactly the same return. Of course, even a random policy is optimal for these trivial MDPs.

\subsection{Helper Lemmas}
We start by introducing a simple identity that will help handle discount factors in our analysis.
\begin{lemma} \label{lemma:discount-identity}
Define $p(H) = \textsc{Geom}(1 - \gamma)$ as the geometric distribution. Let discount factor $\gamma \in (0, 1)$ and random variable $x_t$ be given. Then the following identity holds:
\begin{equation*}
    \E_{p(H)} \left[\sum_{t=0}^H x_t \right] = \sum_{t=0}^\infty \gamma^t x_t.
\end{equation*}
\end{lemma}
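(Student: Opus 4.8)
The plan is to expand the expectation over the geometric horizon explicitly and exchange the order of the two summations. Using the convention that $p(H) = \textsc{Geom}(1-\gamma)$ has support $\{0,1,2,\dots\}$ with $p(H = h) = (1-\gamma)\gamma^h$, I would first write
\begin{equation*}
    \E_{p(H)}\!\left[\sum_{t=0}^H x_t\right] = \sum_{h=0}^\infty (1-\gamma)\gamma^h \sum_{t=0}^h x_t.
\end{equation*}
Then I would swap the order of the two sums, collecting for each fixed $t$ all terms with $h \ge t$:
\begin{equation*}
    \sum_{h=0}^\infty (1-\gamma)\gamma^h \sum_{t=0}^h x_t = (1-\gamma)\sum_{t=0}^\infty x_t \sum_{h=t}^\infty \gamma^h = (1-\gamma)\sum_{t=0}^\infty x_t \cdot \frac{\gamma^t}{1-\gamma} = \sum_{t=0}^\infty \gamma^t x_t,
\end{equation*}
where the inner geometric series $\sum_{h=t}^\infty \gamma^h = \gamma^t/(1-\gamma)$ uses $\gamma \in (0,1)$. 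This gives the claimed identity.

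An alternative (and essentially equivalent) route I would mention is the tail-probability form: since $\Pr[H \ge t] = \gamma^t$, one has $\sum_{t=0}^H x_t = \sum_{t=0}^\infty x_t \,\1[H \ge t]$, and taking expectations term by term yields $\E_{p(H)}[\sum_{t=0}^H x_t] = \sum_{t=0}^\infty x_t \Pr[H \ge t] = \sum_{t=0}^\infty \gamma^t x_t$. Either version is a one-line computation once the interchange of sum and expectation is justified.

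The only real subtlety — and the step I expect to need a sentence of care — is the interchange of summation/expectation order. This is immediate by Tonelli's theorem when the $x_t$ are nonnegative (the relevant case in the paper, since $x_t$ will be rewards or log-likelihood-ratio terms that one can treat coordinatewise), and more generally holds by Fubini whenever $\sum_{t=0}^\infty \gamma^t |x_t| < \infty$. I would simply state this convergence/nonnegativity condition as a standing assumption, so the reordering is valid, and the rest is the elementary geometric-series evaluation above. No further machinery is needed.
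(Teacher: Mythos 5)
Your proposal is correct and matches the paper's own proof essentially step for step: the paper also expands the expectation as $(1-\gamma)\sum_{H=0}^\infty \gamma^H \sum_{t=0}^H x_t$, regroups terms by $t$, and evaluates the resulting geometric series $\sum_{h\ge t}\gamma^h = \gamma^t/(1-\gamma)$. Your explicit attention to justifying the interchange of sums (Tonelli/Fubini) is a small improvement in rigor over the paper, which performs the rearrangement without comment.
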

The proof involves substituting the definition of the Geometric distribution and then rearranging terms.
\begin{proof}
\begin{align*}
    \E_{p(H)} \left[\sum_{t=0}^H x_t \right]
    &= (1 - \gamma) \sum_{H=0}^\infty \gamma^H \sum_{t=0}^H x_t \\
    &= (1 - \gamma) \left( x_0 + \gamma (x_0 + x_1) + \gamma^2 (x_0 + x_1 + x_2) + \cdots \right) \\
    &= (1 - \gamma) \left( x_0 (1 + \gamma + \gamma^2 + \cdots) + x_1 (\gamma + \gamma^2 + \cdots) + \cdots \right) \\
    &= (1 - \gamma) \left( x_0 \frac{1}{1 - \gamma} + x_1 \frac{\gamma}{1 - \gamma} + x_2 \frac{\gamma^2}{1 - \gamma} + \cdots \right)\\
    &= \sum_{t=0}^\infty \gamma^t x_t.
\end{align*}
\end{proof}

The second helper lemma describes how the discounted expected return objective can be written as the expected \emph{terminal} reward of a mixture of finite-length episodes.
\begin{lemma} \label{lemma:finite-horizon}
Define $p(H) = \textsc{Geom}(1 - \gamma)$ as the geometric distribution, and $p(\tau \mid H)$ as a distribution over length-$H$ episodes. We can then write the expected discounted return objective as follows:
\begin{align}
    \E_{p(\tau \mid H = \infty)}\left[\sum_{t=0}^\infty \gamma^t r(s_t, a_t) \right]
    &= \frac{1}{1 - \gamma}\E_{p(H)} \left[ \E_{p(\tau \mid H=H)}\left[r(s_H, a_H) \right] \right] \\
    &= \frac{1}{1 - \gamma} \iint p(H) p(\tau \mid H = H) r(s_H, a_H) d \tau dH.
\end{align}
\end{lemma}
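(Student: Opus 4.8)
The plan is to read the right-hand side as an unrolling of the geometric distribution, using the fact that the finite-horizon trajectory distributions $p(\tau \mid H)$ are all truncations of the \emph{same} generative process (same $p_0$, same $\pi$, same dynamics) that defines $p(\tau \mid H = \infty)$. The key structural identity is that the time-$t$ marginal of $(s_t,a_t)$ under the infinite-horizon rollout coincides with the terminal marginal of $(s_H, a_H)$ under a length-$t$ episode; this is immediate from the factorized form of $p^\pi(\tau)$ in the Notation paragraph, since a finite product of transition/policy factors is a prefix of the infinite one. In symbols, $\E_{p(\tau \mid H = \infty)}[r(s_t, a_t)] = \E_{p(\tau \mid H = t)}[r(s_H, a_H)]$ for every $t \ge 0$.

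First I would expand the right-hand side explicitly. Since $p(H) = \textsc{Geom}(1-\gamma)$ places mass $(1-\gamma)\gamma^t$ on $H = t$, the nested expectation collapses to a series:
\begin{equation*}
\frac{1}{1-\gamma}\,\E_{p(H)}\!\left[\E_{p(\tau \mid H = H)}\!\left[r(s_H, a_H)\right]\right] = \frac{1}{1-\gamma}\sum_{t=0}^\infty (1-\gamma)\gamma^t\, \E_{p(\tau \mid H = t)}\!\left[r(s_H, a_H)\right] = \sum_{t=0}^\infty \gamma^t\, \E_{p(\tau \mid H = t)}\!\left[r(s_H, a_H)\right].
\end{equation*}
Next I would apply the marginal-consistency identity to rewrite each $\E_{p(\tau \mid H = t)}[r(s_H, a_H)]$ as $\E_{p(\tau \mid H = \infty)}[r(s_t, a_t)]$ and then pull the now $t$-independent expectation outside the sum:
\begin{equation*}
\sum_{t=0}^\infty \gamma^t\, \E_{p(\tau \mid H = \infty)}\!\left[r(s_t, a_t)\right] = \E_{p(\tau \mid H = \infty)}\!\left[\sum_{t=0}^\infty \gamma^t r(s_t, a_t)\right],
\end{equation*}
where the interchange of the infinite sum and the expectation is licensed by Tonelli's theorem, since $r > 0$ makes every term nonnegative (so the identity holds in $[0,\infty]$, with no integrability hypothesis needed). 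This gives the first claimed equality; the second equality — rewriting the nested expectations as $\frac{1}{1-\gamma}\iint p(H)\,p(\tau \mid H = H)\, r(s_H, a_H)\, d\tau\, dH$ — is then just spelling out the definition of the nested expectation, the inner integral ranging over length-$H$ episodes $\tau$.

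The step I expect to need the most care is the marginal-consistency claim: one must be precise that $p(\tau \mid H)$ really is the prefix-$H$ truncation of the common rollout process, so that its terminal marginal matches the time-$t$ marginal of the infinite-horizon process. Once this is granted, everything else is bookkeeping with the geometric weights $(1-\gamma)\gamma^t$. (An equivalent packaging would be to set $x_t := \E_{p(\tau \mid H = \infty)}[r(s_t,a_t)]$, note $\E_{p(H)}[\E_{p(\tau \mid H=H)}[r(s_H,a_H)]] = (1-\gamma)\sum_{t} \gamma^t x_t$ directly from the definition of $\textsc{Geom}(1-\gamma)$, and divide by $1-\gamma$; this route still rests on the same consistency identity.)
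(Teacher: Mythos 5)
Your proof is correct and follows essentially the same route as the paper, which simply asserts that the first identity ``follows from the definition of the geometric distribution'' and that the second is a rewriting of expectations as integrals. You supply the details the paper leaves implicit --- the expansion of the $\textsc{Geom}(1-\gamma)$ weights $(1-\gamma)\gamma^t$, the marginal-consistency identity between the terminal marginal of a length-$t$ episode and the time-$t$ marginal of the infinite rollout, and the Tonelli justification (via $r>0$) for exchanging the sum and the expectation --- all of which are exactly the right things to check.
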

\begin{proof}
The first identity follows from the definition of the geometric distribution. The second identity writes the expectations as integrals, which will make future analysis clearer.
\end{proof}

\subsection{Proof of Theorem~\ref{lemma:main-result}}
\label{proof:lemma-main-result}

\begin{proof}
{\scriptsize
\begin{align*}
     \log \E_{\pi}\left[\sum_{t=0}^\infty \gamma^t r(s_t, a_t) \right]
     & \stackrel{(a)}{=} \log \frac{1}{1 - \gamma} \iint p(H) p(\tau \mid H = H) r(s_H, a_H) d \tau dH \\
     & \hspace{-13em} = \log \iint p(H) \frac{p(\tau \mid H = H)}{q_\theta(\tau \mid H = H)}q_\theta(\tau \mid H = H) r(s_H, a_H) d \tau dH - \log (1 - \gamma)\\
     & \hspace{-13em} \stackrel{(b)}{\ge} \int p(H) \left( \log \int \frac{p(\tau \mid H = H)}{q_\theta(\tau \mid H = H)}q_\theta(\tau \mid H = H) r(s_H, a_H) d \tau \right) dH - \log (1 - \gamma) \\
     & \hspace{-13em} \stackrel{(c)}{\ge} \iint p(H) q_\theta(\tau \mid H = H) \left( \log p(\tau \mid H = H) - \log q_\theta (\tau \mid H = H) + \log r(s_H, a_H) \right) d\tau dH - \log (1 - \gamma)\\
     & \hspace{-13em} \stackrel{(d)}{=} \iint p(H) q_\theta(\tau \mid H = H) \left( \left(\sum_{t=0}^H \log p(s_{t+1} \mid s_t, a_t) + \cancel{\log \pi_\theta(a_t \mid s_t)} - \log q_\theta(s_{t+1} \mid s_t, a_t) - \cancel{\log \pi_\theta(a_t \mid s_t)} \right) + \log r(s_H, a_H) \right) d\tau dH  \\
     & \hspace{-10em} - \log (1 - \gamma)\\
     & \hspace{-13em} \stackrel{(e)}{=} \iint p(H) q_\theta(\tau \mid H = \infty)\left( \left(\sum_{t=0}^H \log p(s_{t+1} \mid s_t, a_t) - \log q_\theta(s_{t+1} \mid s_t, a_t) \right) + \log r(s_H, a_H) \right) d\tau dH - \log (1 - \gamma)\\
     & \hspace{-13em} \stackrel{(f)}{=} \int q_\theta(\tau) \int p(H) \left( \left(\sum_{t=0}^H \log p(s_{t+1} \mid s_t, a_t) - \log q_\theta(s_{t+1} \mid s_t, a_t) \right) + \log r(s_H, a_H) \right) dH d\tau - \log (1 - \gamma)\\
     & \hspace{-13em} \stackrel{(g)}{=} \int q_\theta(\tau) \E_{p(H)} \left[ \left( \sum_{t=0}^H \log p(s_{t+1} \mid s_t, a_t) - \log q_\theta(s_{t+1} \mid s_t, a_t) \right) + \log r(s_H, a_H) \right] d\tau - \log (1 - \gamma)\\
     & \hspace{-13em} \stackrel{(h)}{=} \int q_\theta(\tau) \sum_{t=0}^\infty \gamma^t \left(\log p(s_{t+1} \mid s_t, a_t) - \log q_\theta(s_{t+1} \mid s_t, a_t) + (1 - \gamma) \log r(s_H, a_H) \right) d\tau - \log (1 - \gamma) \\
     & \hspace{-13em} \stackrel{(i)}{=} \E_{q_\theta(\tau)} \left[ \sum_{t=0}^\infty \gamma^t \left(\log p(s_{t+1} \mid s_t, a_t) - \log q_\theta(s_{t+1} \mid s_t, a_t) + (1 - \gamma) \log r(s_H, a_H) - (1 - \gamma)\log (1 - \gamma) \right) \right].
\end{align*}}
For \emph{(a)}, we applied Lemma~\ref{lemma:finite-horizon}.
For \emph{(b)}, we applied Jensen's inequality.
For \emph{(c)}, we applied Jensen's inequality again.
For \emph{(d)}, we substituted the definitions of $p_\theta(\tau \mid H)$ and $q_\theta(\tau \mid H)$.
For \emph{(e)}, we noted that the term inside the summation only depends on the first $H$ steps of the trajectory, so collecting longer trajectories will not change the result. This allows us to rewrite the integral as an expectation using a single infinite-length trajectory.
For \emph{(f)}, we recalled the definition $q_\theta(\tau) = q_\theta(\tau = H = \infty)$ and swap the order of integration.
For \emph{(g)}, we express the inner integral over $p(H)$ as an expectation.
For \emph{(h)}, we applied the identity from Lemma~\ref{lemma:discount-identity}.
For \emph{(i)}, we moved the constant $\log (1 - \gamma)$ back inside the integral and rewrote the integral as an expectation. We have thus obtained the desired result.
\end{proof}

\clearpage

\subsection{Proof of Lemma~\ref{lemma:tight}}
\label{proof:lemma-tight}
Before presenting the proof of Theorem~\ref{lemma:main-result} itself, we show how we derived the lower bound in this more general case. While this step is not required for the proof, we include it because it sheds light on how similar lower bounds might be derived for other problems. We define $\gamma_\theta(H)$ to be a learned distribution over horizons $H$. We then proceed, following many of the same steps as for the proof of Theorem~\ref{lemma:main-result}.
{\scriptsize
\begin{align}
     \log \E_{\pi}\left[\sum_{t=0}^\infty \gamma^t r(s_t, a_t) \right] \nonumber
     & \stackrel{(a)}{=} \log \iint \frac{p(\tau, H)}{q_\theta(\tau, H)}q_\theta(\tau, H) r(s_H, a_H) d \tau dH - \log (1 - \gamma) \nonumber \\
     & \hspace{-13em} \stackrel{(b)}{\ge} \iint q_\theta(\tau, H) \left( \log p(\tau, H) - \log q_\theta(\tau, H) + \log r(s_H, a_H) d \tau \right) dH - \log (1 - \gamma) \label{eq:l-gamma-traj} \\
     & \hspace{-13em} \stackrel{(c)}{=} \int \sum_{H=0}^\infty \gamma_\theta(H) q_\theta(\tau \mid H) \left( \left( \sum_{t=0}^H \log p(s_{t+1} \mid s_t, a_t) - \log q_\theta(s_{t+1} \mid s_t, a_t) \right) + \log p(H) - \log \gamma_\theta(H) + \log r(s_H, a_H) d \tau \right) - \log (1 - \gamma) \nonumber \\
     & \hspace{-13em} \stackrel{(d)}{=} \int q_\theta(\tau \mid H=\infty) \sum_{H=0}^\infty \gamma_\theta(H) \left( \left( \sum_{t=0}^H \log p(s_{t+1} \mid s_t, a_t) - \log q_\theta(s_{t+1} \mid s_t, a_t) \right) + \log p(H) - \log \gamma_\theta(H) + \log r(s_H, a_H) d \tau \right) - \log (1 - \gamma) \nonumber \\
     & \hspace{-13em} \stackrel{(e)}{=} \int q_\theta(\tau) \sum_{H=0}^\infty \gamma_\theta(H) \left( \left( \sum_{t=0}^H \log p(s_{t+1} \mid s_t, a_t) - \log q_\theta(s_{t+1} \mid s_t, a_t) \right) + \log p(H) - \log \gamma_\theta(H) + \log r(s_H, a_H) d \tau \right) - \log (1 - \gamma) \nonumber \\
     & \hspace{-13em} \stackrel{(f)}{=} \E_{q_\theta(\tau)} \left[ \sum_{H=0}^\infty \gamma_\theta(H) \left( \left( \sum_{t=0}^H \log p(s_{t+1} \mid s_t, a_t) - \log q_\theta(s_{t+1} \mid s_t, a_t) \right) + \cancel{\log (1 - \gamma)} + H \log \gamma - \log \gamma_\theta(H) + \log r(s_H, a_H) \right) \right]  - \cancel{\log (1 - \gamma)} \nonumber \\
     & \hspace{-13em} \stackrel{(g)}{=} \E_{q_\theta(\tau)} \left[ \sum_{H=0}^\infty \left(\sum_{t=H}^\infty q(t) \right) \left(\log p(s_{H+1} \mid s_H, a_H) - \log q_\theta(s_{H+1} \mid s_H, a_H) \right) + \gamma_\theta(H) \left(H \log \gamma - \log \gamma_\theta(H) + \log r(s_H, a_H) \right) \right] \nonumber \\
     & \hspace{-13em} \stackrel{(h)}{=} \E_{q_\theta(\tau)} \left[ \sum_{H=0}^\infty \left(1 - \sum_{t=0}^{H-1} q(t) \right) \left(\log p(s_{H+1} \mid s_H, a_H) - \log q_\theta(s_{H+1} \mid s_H, a_H) \right) + \gamma_\theta(H) \left(H \log \gamma - \log \gamma_\theta(H) + \log r(s_H, a_H) \right) \right] \nonumber \\
     & \hspace{-13em} \stackrel{(i)}{=} \E_{q_\theta(\tau)} \left[ \sum_{H=0}^\infty \left(1 - \Gamma_\theta(H-1) \right) \left(\log p(s_{H+1} \mid s_H, a_H) - \log q_\theta(s_{H+1} \mid s_H, a_H) \right) + \gamma_\theta(H) \left(H \log \gamma - \log \gamma_\theta(H) + \log r(s_H, a_H) \right) \right] \nonumber \\
     & \hspace{-13em} \stackrel{(j)}{=} \E_{q_\theta(\tau)} \left[ \sum_{H=0}^\infty \gamma_\theta(H) \left( \frac{1 - \Gamma_\theta(H-1)}{\gamma_\theta(H)} \left(\log p(s_{H+1} \mid s_H, a_H) - \log q_\theta(s_{H+1} \mid s_H, a_H) \right) + H \log \gamma - \log \gamma_\theta(H) + \log r(s_H, a_H) \right) \right]. \nonumber
\end{align}}
For \emph{(a)}, we applied Lemma~\ref{lemma:finite-horizon} and multiplied the integrand by $\frac{q_\theta(\tau \mid H = H)\gamma_\theta(H)}{q_\theta(\tau \mid H = H)\gamma_\theta(H)} = 1.$
For \emph{(b)}, we applied Jensen's inequality.
For \emph{(c)}, we factored $p(\tau, H) = p(\tau, H) p(H)$ and $q_\theta(\tau, H) = q(\tau \mid H) \gamma_\theta(H)$. Note that under the joint distribution $p(\tau, H)$, the horizon $H \sim p(H) = \textsc{Geom}(1 - \gamma)$ is independent of the trajectory, $\tau$.
For \emph{(d)}, we rewrote the expectation as an expectation over a single infinite-length trajectory and simplified the summand.
For \emph{(e)}, we recall the definition $q_\theta(\tau) = q_\theta(\tau = H = \infty)$.
For \emph{(f)}, we rewrote the integral as an expectation and wrote out the definition of the geometric distribution, $p(H)$.
For \emph{(g)}, we regrouped the difference of dynamics terms.
For \emph{(h)}, we noted used the fact that $\sum_{t=0}^{H-1} \gamma_\theta(t) + \gamma_{t=H}^\infty \gamma_\theta(t) = 1$.
For \emph{(i)}, we substituted the definition of the CDF function.
For \emph{(j)}, we rearranged terms so that all were multiplied by the discount $\gamma_\theta(H)$.
Thus, we have obtained the desired result. We now prove Lemma~\ref{lemma:tight}, showing that Eq.~\ref{eq:lb2} becomes tight at optimality.
\begin{proof}
{\scriptsize
\begin{align}
    \gL_\gamma(\theta) & \stackrel{(a)}{=} \iint q_\theta(\tau, H) \left( \log p(\tau, H) - \log q_\theta(\tau, H) + \log r(s_H, a_H) d \tau \right) dH - \log (1 - \gamma) \nonumber \\
    & \qquad \stackrel{(b)}{=} \iint q_\theta(\tau) \gamma_\theta(H \mid \tau) \left( \log p(\tau) + \log p(H) - \log q_\theta(\tau) - \log \gamma_\theta(H \mid \tau) + \log r(s_H, a_H) d \tau \right) dH - \log (1 - \gamma) \label{eq:tight-2}
\end{align}}
For \emph{(a)}, we undo some of the simplifications above, going back to Eq.~\ref{eq:l-gamma-traj}
For \emph{(b)}, we factor $q_\theta(\tau, H) = q_\theta(\tau) \gamma_\theta(H \mid \tau)$ and $p(\tau, H) = p(\tau) p(H)$.
At this point, we can solve analytically for the optimal discount distribution, $\gamma_\theta(H \mid \tau)$:
\begin{equation}
    \gamma_\theta^*(H \mid \tau) = \frac{p(H) r(s_H, a_H)}{\sum_{H'=0}^\infty p(H') r(s_{H'}, a_{H'})} = \frac{p(H) r(s_H, a_H)}{(1 - \gamma)R(\tau)} \label{eq:q-h-opt}
\end{equation}
In the second equality, we substitute the definition of $R(\tau)$.
We then substitute Eq.~\ref{eq:q-h-opt} into our expression for $\gL_\gamma(\theta)$ and simplify the resulting expression.
{\scriptsize
\begin{align*}
    \hspace{-10em} \gL_\gamma(\theta)
    &= \iint q_\theta(\tau) \gamma_\theta(H \mid \tau) \left( \log p(\tau) + \cancel{\log p(H)} - \log q_\theta(\tau) - \cancel{\log p(H)} - \cancel{\log r(s_H, a_H)} + \cancel{\log (1 - \gamma)} + \log R(\tau) + \cancel{\log r(s_H, a_H)} d \tau \right) dH - \cancel{\log (1 - \gamma)} \\
    &= \iint q_\theta(\tau) \gamma_\theta(H \mid \tau) \left( \log p(\tau) - \log q_\theta(\tau) + \log R(\tau) d \tau \right) dH \\
    &= \int q_\theta(\tau) \left( \log p(\tau) - \log q_\theta(\tau) + \log R(\tau) \right) d \tau.
\end{align*}}
In the final line we have removed the integral over $H$ because none of the integrands depend on $H$.
At this point, we can solve analytically for the optimal trajectory distribution, $q_\theta(\tau)$:
\begin{equation}
    q^*(\tau) = \frac{p(\tau) R(\tau)}{\int p(\tau') R(\tau') d \tau'}. \label{eq:q-tau-opt}
\end{equation}
We then substitute Eq.~\ref{eq:q-tau-opt} into our expression for $\gL_\gamma(\theta)$, and simplify the resulting expression:
{\scriptsize
\begin{align*}
    \hspace{-10em} \gL_\gamma(\theta)
    &= \int q_\theta(\tau) \left( \cancel{\log p(\tau)} - \cancel{\log p(\tau)} - \cancel{\log R(\tau)} + \log \int p(\tau') R(\tau') d \tau' + \cancel{\log R(\tau)} \right) d \tau \\
    &= \log \int p(\tau) R(\tau) d \tau = \log \E_{\pi}\left[\sum_{t=0}^\infty \gamma^t r(s_t, a_t) \right].
\end{align*}}

We have thus shown that the lower bound $\gL_\gamma$ becomes tight when we use the optimal distribution over trajectories $q_\theta(\tau)$ and optimal learned discount $\gamma_\theta(H \mid \tau)$.
\end{proof}

\subsection{A lower bound for goal-reaching tasks.}
\label{appendix:goals}

Many RL problems can be better formulated as goal-reaching problems, a formulation that does not require defining a reward function. We now introduce a variant of our method for goal-reaching tasks. Using $\rho^\pi(s_{t+})$ to denote the discounted state occupancy measure of policy $\pi$, we define the goal-reaching objective as maximizing the probability density of reaching a desired goal $s_g$:
\begin{equation}
    \max_\theta \log \rho^{\pi_\theta}(s_{t+} = s_g).
\end{equation}
We refer the reader to~\citet{eysenbach2020c} for a more detailed discussion of this objective. For simplicity, we assume that the goal is fixed, noting that the multi-task setting can be handled by conditioning the policy on the commanded goal. Similar to Theorem~\ref{lemma:main-result}, we can construct a lower bound on the goal-conditioned RL problem:
\begin{lemma} \label{lemma:goals}
Let initial state distribution $p_1(s_1)$, real dynamics $p(s_{t+1} \mid s_t, a_t)$, reward function $r(s_t, a_t) > 0$, discount factor $\gamma \in (0, 1)$, and goal $g$ be given. Then the following bound holds for \emph{any} dynamics $q(s_{t+1} \mid s_t, a_t)$ and policy $\pi(a_t \mid s_t)$:
\begin{align}
    \log p^{\pi_\theta}(s_{t+} = s_g) \ge \E_{q^{\pi_\theta}(\tau)} \left[\sum_{t=0}^\infty \gamma^t \tilde{r}(s_t, a_t) \right],
\end{align}
where
\begin{align*}
\tilde{r}_g(s_t, a_t, s_{t+1}) \triangleq & (1 - \gamma) (\log p(s_{t+1} = s_g \mid s_t, a_t) - \log q(s_{t+1} = s_g \mid s_t, a_t) - \log(1 - \gamma)) \\
&+ \log p(s_{t+1} \mid s_t, a_t) - \log q(s_{t+1} \mid s_t, a_t).
\end{align*}
\end{lemma}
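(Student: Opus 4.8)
The plan is to follow the derivation of Theorem~\ref{lemma:main-result} almost line for line, after recasting the goal-reaching problem in the same probabilistic-inference language: the trajectory is the latent variable, and ``the episode reaches $s_g$'' plays the role of the observed event whose evidence we lower-bound. First I would rewrite the discounted occupancy in its next-state form as an expected one-step goal-hitting density at a geometrically distributed horizon: $\rho^{\pi_\theta}(s_{t+}=s_g) = (1-\gamma)\sum_{t=0}^\infty \gamma^t\,\E_{p^{\pi_\theta}(s_t,a_t)}\big[p(s_{t+1}=s_g\mid s_t,a_t)\big]$, which by Lemma~\ref{lemma:finite-horizon} (applied with the function $r(s_t,a_t)\triangleq p(s_{t+1}=s_g\mid s_t,a_t)$) equals $\E_{H\sim\mathrm{Geom}(1-\gamma)}\big[\E_{p(\tau_H)}[\,p(s_{H+1}=s_g\mid s_H,a_H)\,]\big]$, where $p(\tau_H)$ is the distribution over length-$H$ real trajectories. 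This is exactly the identity used in step (a) of the proof of Theorem~\ref{lemma:main-result}, with the terminal ``reward'' $r(s_H,a_H)$ replaced by the terminal goal-hitting density $p(s_{H+1}=s_g\mid s_H,a_H)$.

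Second, I would import the rest of that proof essentially verbatim. Take the logarithm; insert the length-$H$ \emph{model} trajectory distribution $q_\theta(\tau_H)$ into numerator and denominator; push the log inside the expectation over $H$ and then inside the expectation over $\tau_H$ by two applications of Jensen's inequality; expand $\log p(\tau_H) - \log q_\theta(\tau_H) = \sum_{t<H}\big(\log p(s_{t+1}\mid s_t,a_t) - \log q_\theta(s_{t+1}\mid s_t,a_t)\big)$, noting that the initial-state factor and all policy factors cancel exactly as in step (d); re-express the integral over finite trajectories as an expectation over a single infinite-length model trajectory $q^{\pi_\theta}(\tau)$, using that the summand depends only on the first $H$ steps (steps (e)--(f)); and finally collapse the expectation over $H$ back into a $\gamma$-discounted sum via Lemma~\ref{lemma:discount-identity}. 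As in Theorem~\ref{lemma:main-result}, the geometric averaging turns the single terminal term into a $(1-\gamma)$-weighted per-timestep term and generates the constant $-(1-\gamma)\log(1-\gamma)$, while the path-wise terms assemble into the $\log\frac{p(s_{t+1}\mid s_t,a_t)}{q(s_{t+1}\mid s_t,a_t)}$ summand of $\tilde r_g$.

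The one genuinely new step — and the part I expect to be the main obstacle — is turning the terminal goal term into the \emph{ratio} $(1-\gamma)\big(\log p(s_{t+1}=s_g\mid s_t,a_t) - \log q(s_{t+1}=s_g\mid s_t,a_t)\big)$ appearing in $\tilde r_g$, rather than the bare $(1-\gamma)\log p(s_{t+1}=s_g\mid s_t,a_t)$ that a naive port of Theorem~\ref{lemma:main-result} produces (and which, unlike the ratio, is not estimable by a classifier). The route I would take is to append one extra transition, pinned to $s_g$, to each length-$H$ trajectory and to use as the variational object the model trajectory together with its pinned terminal step, so that this extra transition contributes the factor $\frac{p(s_{t+1}=s_g\mid s_t,a_t)}{q_\theta(s_{t+1}=s_g\mid s_t,a_t)}$ to the importance weight alongside the usual path-wise dynamics ratios; after Jensen and the discount identity this becomes the desired term. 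The subtlety to be careful about is that pinning the terminal state makes $q_\theta(s_{t+1}=s_g\mid s_t,a_t)$ a density rather than a normalized distribution, so I would either track the resulting normalization constant and verify it cancels, or — equivalently — view the extra step simply as one more dynamics-ratio contribution evaluated at $s_g$ at the terminal timestep. Throughout, positivity of $p(s_{t+1}=s_g\mid\cdot)$ and absolute continuity of $p(\cdot\mid s_t,a_t)$ with respect to $q_\theta(\cdot\mid s_t,a_t)$ are the standing assumptions that keep every log-ratio finite, exactly as positivity of $r$ is assumed in Theorem~\ref{lemma:main-result}.
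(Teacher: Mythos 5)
Your proposal follows the paper's proof essentially step for step: the paper likewise rewrites $\rho^{\pi_\theta}(s_{t+}=s_g)$ as a geometric-horizon expectation of the one-step goal-hitting density $p(s_g\mid s_H,a_H)$, inserts the model trajectory distribution $q_\theta(\tau\mid H)$, applies Jensen, and collapses the expectation over $H$ with the discount identity, exactly as in Theorem~\ref{lemma:main-result}. The ``genuinely new step'' you flag is handled in the paper precisely as you propose --- multiplying and dividing by $q_\theta(s_g\mid s_H,a_H)$ and absorbing the numerator into the variational measure as a pinned terminal transition --- and the normalization subtlety you raise is real: the paper's Jensen step silently treats $p(H)\,q_\theta(\tau\mid H)\,q_\theta(s_g\mid s_H,a_H)$ as though it were a normalized measure over $(\tau,H)$ without tracking the resulting constant, so your instinct to verify that this factor cancels is, if anything, more careful than the published argument.
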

The proof, presented below, is similar to the proof of Theorem~\ref{lemma:main-result}.
The first term in the reward function, the log ratio of reaching the commanded goal one time step in the future, is similar to prior work~\citep{rudner2021outcome}. The correction term $\log p - \log q$ incentivizes the policy to avoid transitions where the model is inaccurate, and can be estimated using a separate classifier.
One important aspect of this goal-reaching problem is that it is entirely data-driven, avoiding the need for any manually-designed reward functions.

\begin{proof}

{\scriptsize
\begin{align*}
    \log \rho^{\pi_\theta}(s_{t+} = s_g)
     & = \log \iint p(H) p(\tau \mid H = H) p(s_g \mid s_H, a_H) d \tau dH \\
     & \hspace{-5em} = \log \iint p(H) \frac{p(\tau \mid H = H)}{q_\theta(\tau \mid H = H)}q_\theta(\tau \mid H = H) \frac{p(s_g \mid s_H, a_H)}{q_\theta(s_g \mid s_H, a_H)}q_\theta(s_g \mid s_H, a_H) d \tau dH - \log (1 - \gamma)\\
     & \hspace{-5em} \ge \iint p(H) q_\theta(\tau \mid H = H) \left( \log p(\tau \mid H = H) - \log q_\theta (\tau \mid H = H) + \log p(s_g \mid s_H, a_H) - \log q_\theta(s_g \mid s_H, a_H) \right) d\tau dH - \log (1 - \gamma)\\
     & \hspace{-5em} = \iint p(H) q_\theta(\tau \mid H = \infty) \left(\sum_{t=0}^H \log p(s_{t+1} \mid s_t, a_t) - \log q_\theta(s_{t+1} \mid s_t, a_t) \right) + \log p(s_g \mid s_H, a_H) - \log q_\theta(s_g \mid s_H, a_H) d\tau dH - \log (1 - \gamma)\\
     & \hspace{-5em} = \int q_\theta(\tau) \int p(H) \left(\sum_{t=0}^H \log p(s_{t+1} \mid s_t, a_t) - \log q_\theta(s_{t+1} \mid s_t, a_t)\right) + \log p(s_g \mid s_H, a_H) - \log q_\theta(s_g \mid s_H, a_H) dH d\tau - \log (1 - \gamma)\\
     & \hspace{-5em} = \int q_\theta(\tau) \sum_{t=0}^\infty \gamma^t \left( \log p(s_{t+1} \mid s_t, a_t) - \log q_\theta(s_{t+1} \mid s_t, a_t) + (1 - \gamma)(\log p(s_g \mid s_t, a_t) - \log q_\theta(s_g \mid s_t, a_t) \right) d\tau - \log (1 - \gamma))\\
     & \hspace{-5em} = \E_{q_\theta(\tau)} \left[ \sum_{t=0}^\infty \gamma^t \left( \log p(s_{t+1} \mid s_t, a_t) - \log q_\theta(s_{t+1} \mid s_t, a_t) + (1 - \gamma)(\log p(s_g \mid s_t, a_t) - \log q_\theta(s_g \mid s_t, a_t) - \log (1 - \gamma) \right) \right].
\end{align*}}

\end{proof}
Similar to the more complex lower bound presented in Eq.~\ref{eq:lb2}, this lower bound on goal-reaching can be modified (by learning a discount factor) to become a tight lower bound. The resulting objective would resemble a model-based version of the algorithm from~\citet{rudner2021outcome}.

\subsection{Derivation of Model Objective (Eq.~\ref{eq:model-objective})}
\label{appendix:single-transition}

Our lower bound depends on entirely trajectories sampled from the learned dynamics. In this section, we show how the same objective can be expressed as an expectation of transitions. This expression is easier to optimize, as it does not require backpropagating gradients through time.
We start by writing our lower bound, conditioned on a current state $s_t$.
\begin{align*}
    \E_{\substack{\pi(a_t \mid s_t),\\q_\theta(s_{t+1 \mid s_t, a_t)}}}& \left[\sum_{t' = t}^\infty \gamma^{t' - t} \tilde{r}(s_{t'}, a_{t'}) \mid s_t\right] \\
    &\stackrel{}{=} \E_{\substack{\pi(a_t \mid s_t),\\q_\theta(s_{t+1 \mid s_t, a_t)}}}\left[\tilde{r}(s_t, a_t, s_{t+1}) + \gamma V(s_{t+1}) \mid s_t\right] \\
    &\stackrel{(a)}{=} \E_{\substack{\pi(a_t \mid s_t),\\q_\theta(s_{t+1 \mid s_t, a_t)}}}\left[(1 - \gamma) \log r(s_t, a_t) + \log \frac{C_\phi(s_t, a_t, s_{t+1})}{1 - C_\phi(s_t, a_t, s_{t+1})} - (1 - \gamma)\log(1 - \gamma) + \gamma V(s_{t+1}) \mid s_t\right] \\
\end{align*}
In \emph{(a)}, we substituted the definition of the augmented return. For the purpose of optimizing the dynamics model, we can ignore all terms that do not depend on $s_{t+1}$. Removing these terms, we arrive at our model training objective (Eq.~\ref{eq:model-objective})

\section{Additional Experiments}
\label{appendix:additional-experiments}

\paragraph{Fig.~\ref{fig:model-ablation}.} We compare MnM to a number of alternative model learning methods. MBPO~\citep{janner2019trust} uses a standard maximum likelihood model. We implement a version of VAML~\citep{farahmand2017value}, which augments the maximum likelihood loss with an additional temporal difference loss; the model should predict next states that have low Bellman error. Finally, we compare to a variant of the MBPO maximum likelihood model that weights transitions based on the Q values, an idea discussed (but not actually implemented) in~\citet{lambert2020objective}. We implement this value weighting method by computing the Q values for the current states and computing a softmax over the batch dimension to obtain per-example weights.

\begin{figure}[t]
    \vspace{-1em}
    \centering
    \begin{subfigure}[b]{0.45\textwidth}
        \centering
        \includegraphics[width=\linewidth]{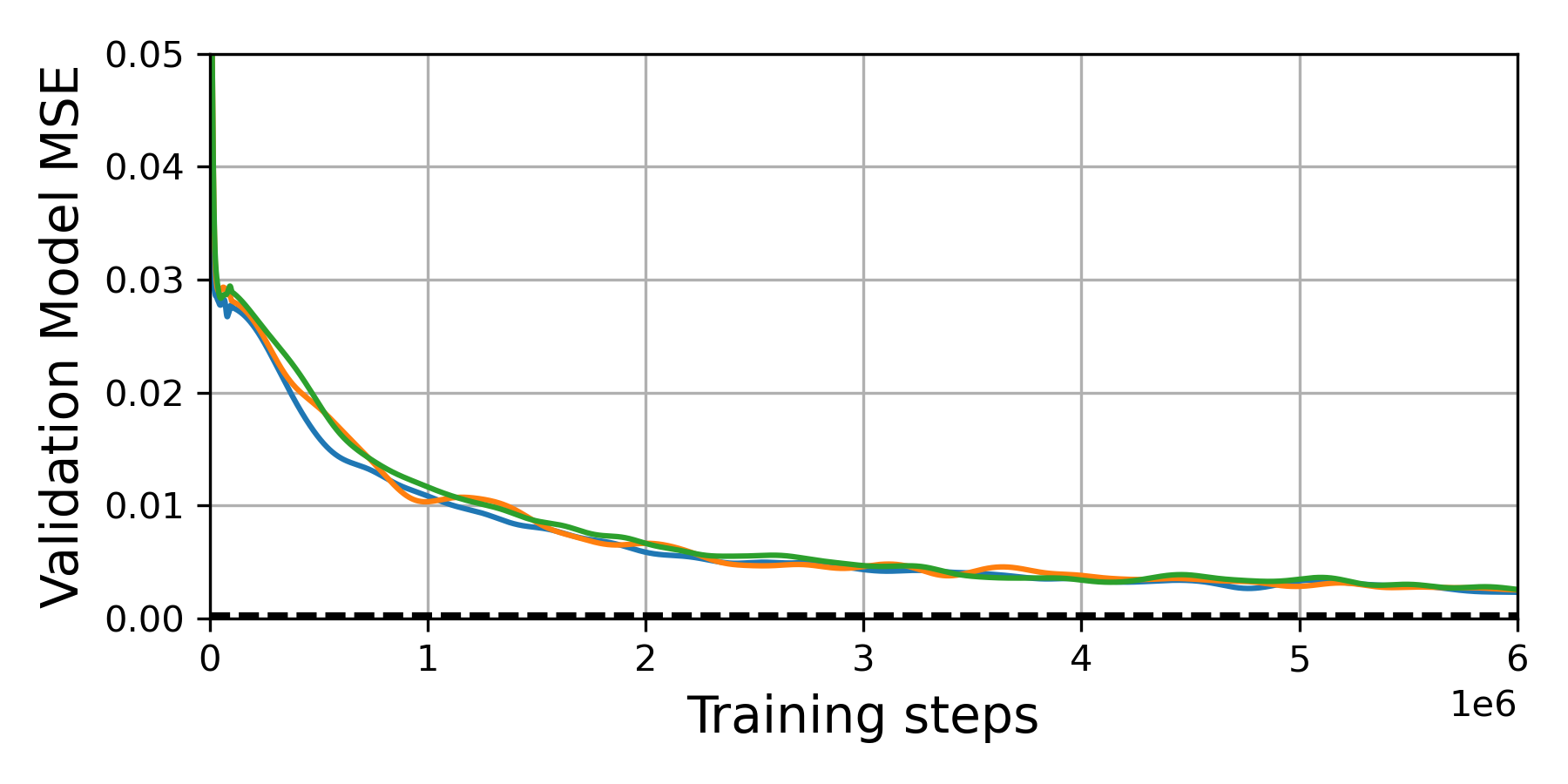}
        \caption{\footnotesize \texttt{DClawScrewFixed-v0}}
    \end{subfigure}%
    ~
    \begin{subfigure}[b]{0.45\textwidth}
        \centering
        \includegraphics[width=\linewidth]{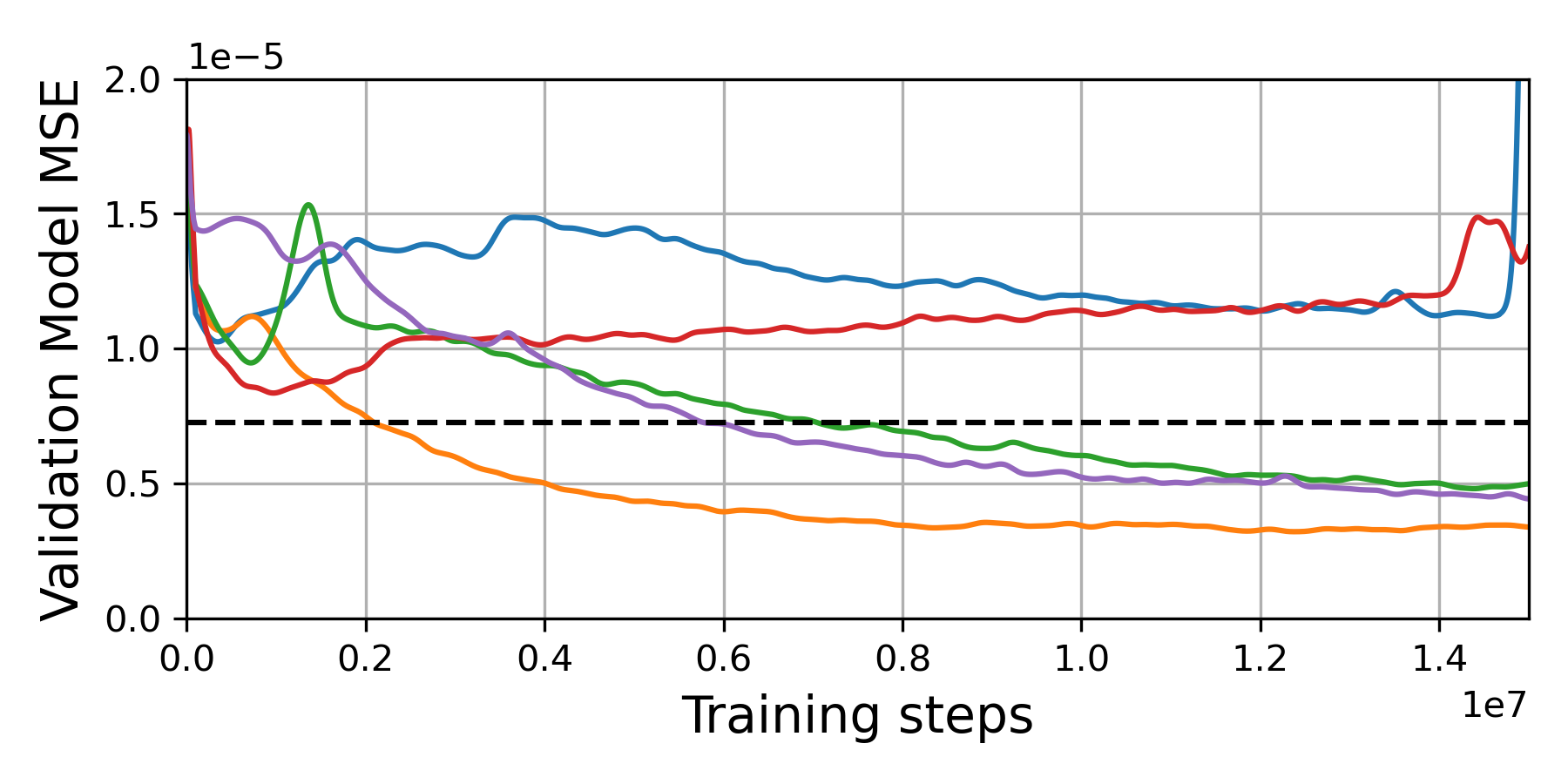}
        \caption{\footnotesize \texttt{metaworld-drawer-open-v2}}
    \end{subfigure}
    \caption{\footnotesize \textbf{MnM-approx Model MSE}. Different lines show different random seeds, while the dashed horizontal line shows the minimum MSE of a maximum likelihood model (averaged across seeds). \label{fig:model-mse}}
\end{figure}

\begin{figure}[t]
    \vspace{-1em}
    \centering
    \begin{subfigure}[b]{0.45\textwidth}
        \centering
        \includegraphics[width=\linewidth]{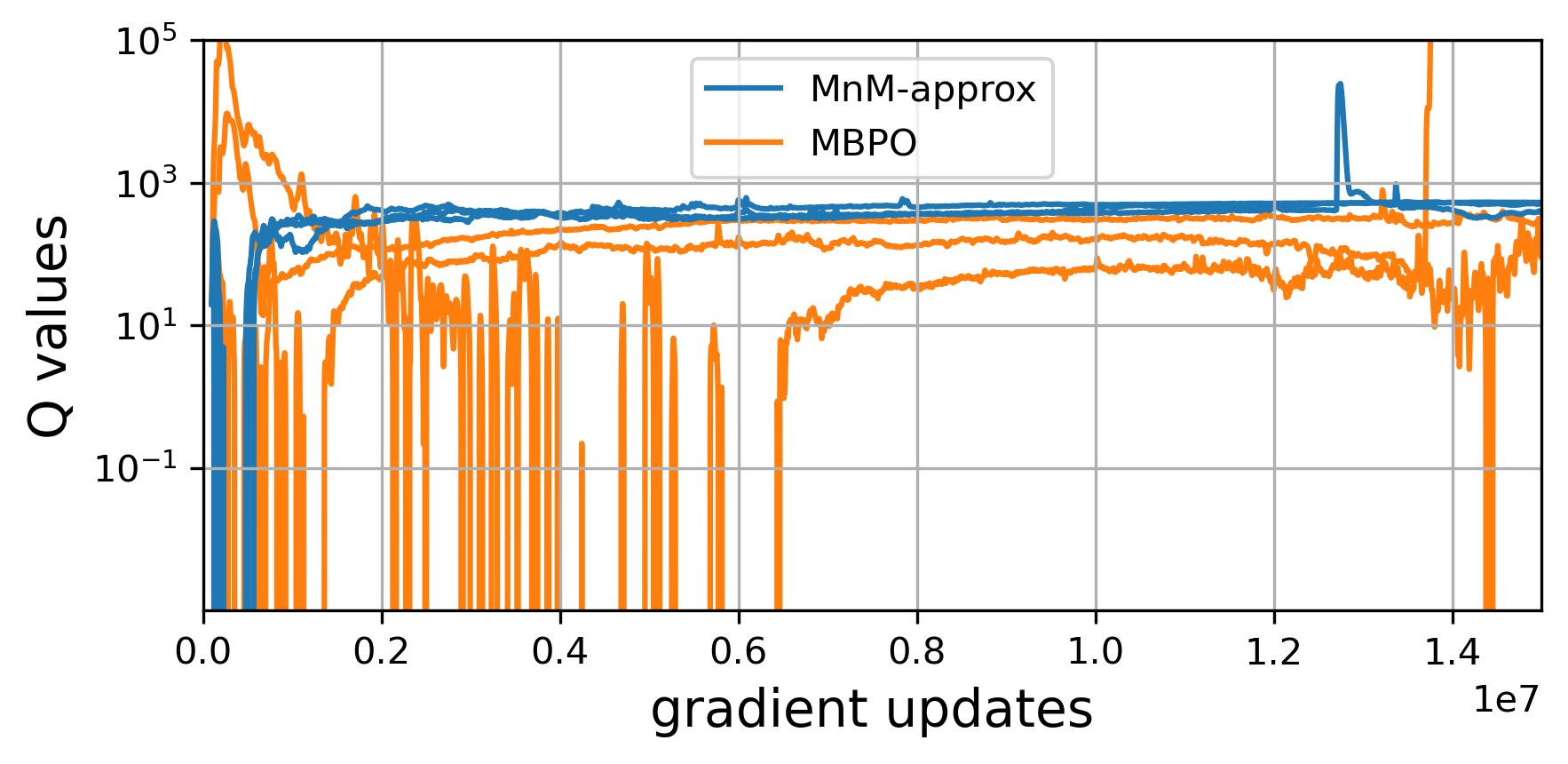}
        \caption{\footnotesize \texttt{DClawScrewFixed-v0}}
    \end{subfigure}%
    ~
    \begin{subfigure}[b]{0.45\textwidth}
        \centering
        \includegraphics[width=\linewidth]{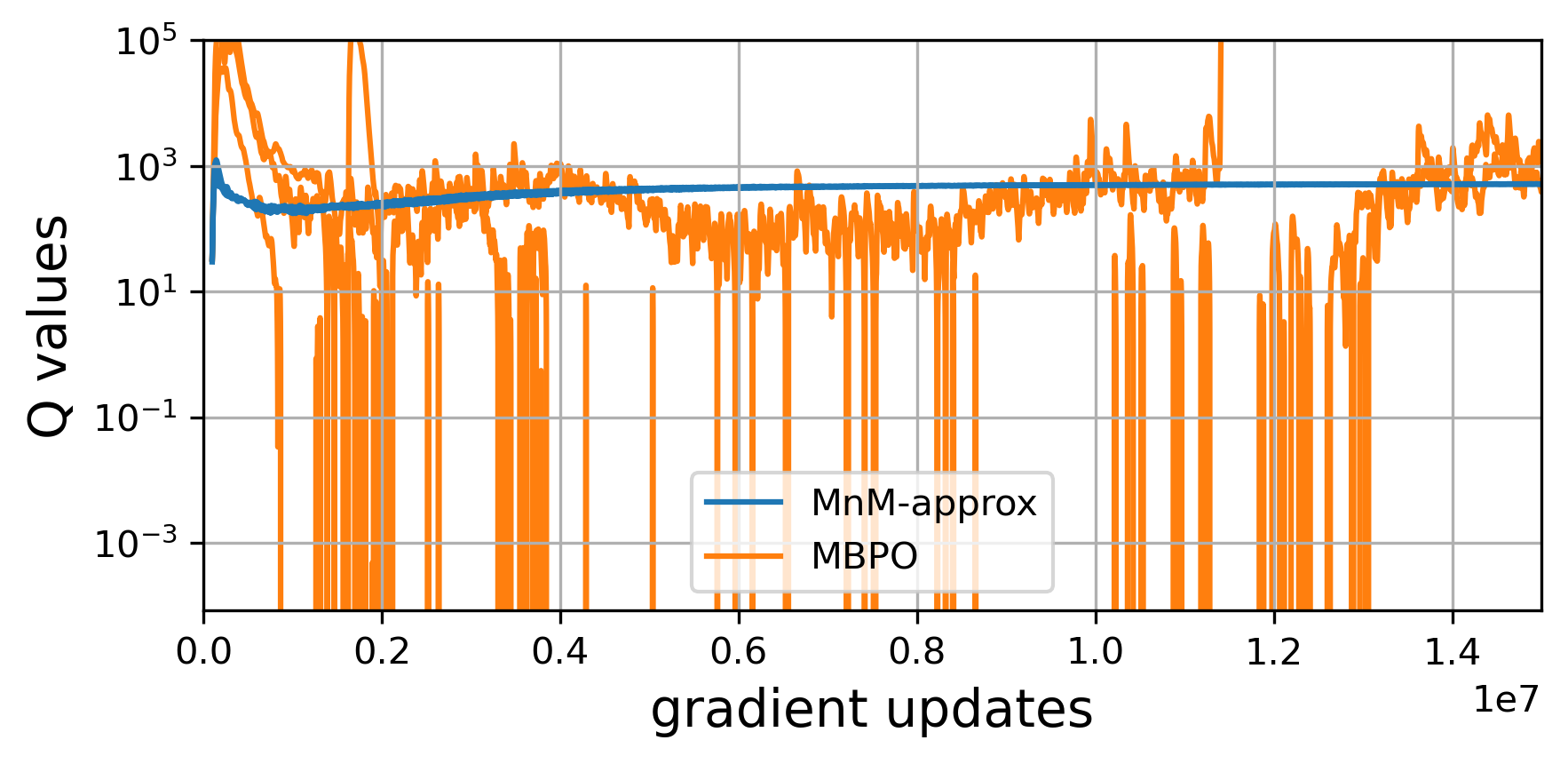}
        \caption{\footnotesize \texttt{DClawScrewRandom-v0}}
    \end{subfigure}
    \caption{\footnotesize \textbf{MnM-approx Q-values} \label{fig:q-values-more}}
\end{figure}

\paragraph{Fig.~\ref{fig:model-mse}} In this experiment, we show the model MSE throughout the training of MnM-approx, similar to Fig.~\ref{fig:gan-stability}. While MnM-approx does not optimize for MSE, the MSE is nonetheless a rough barometer for whether the model is training stably. On the \texttt{DClawScrewFixed-v0} task, we observe that $\frac{3}{3}$ seeds all train stably, converges towards (though not quite reaching) the MSE of the maximum likelihood model. On the \texttt{metaworld-drawer-open-v2} task, the results are a bit more complicated, with $\frac{3}{5}$ seeds achieving a model MSE much lower than the maximum likelihood model, while two seeds seem to have failed to converge. Overall, these plots indicate that MnM-approx often trains stably, but some seeds can fail to converge on some environments.

\begin{wrapfigure}[12]{R}{0.4\textwidth}
    \centering
    \includegraphics[width=\linewidth]{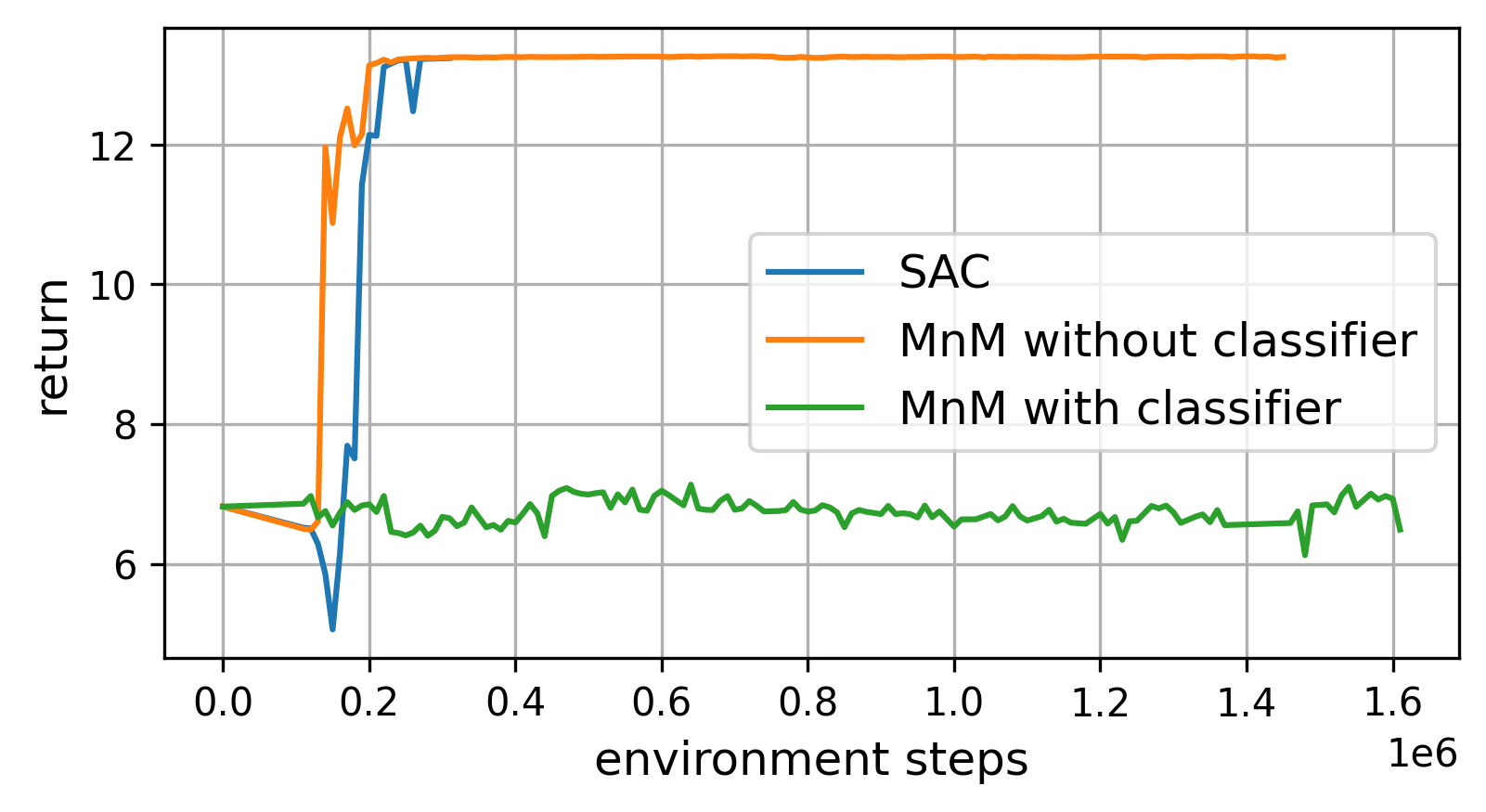}
    \vspace{-1.5em}
    \caption{Adding the classifier term to MnM degrades performance on the \texttt{metaworld-drawer-open-v2} task. \label{fig:no-classifier}}
\end{wrapfigure}

\paragraph{Fig.~\ref{fig:q-values-more}}
In this experiment, we show the Q-values throughout the training of MnM-approx, similar to Fig.~\ref{fig:q-values}. In both environments, we observe that the Q-values from MnM-approx are more stable than the Q-values from MBPO. Note how the Q-values from MBPO tend to peak early during training, suggesting that they have overestimated the true returns and then correct towards a less inflated estimate of the agent's expected return

\paragraph{Fig.~\ref{fig:no-classifier}.} This is a plot from a preliminary version of MnM, when we were testing the effect of the classifier term. In experiments like these, we found that the classifier term significantly hurt performance, motivating us to not include the classifier term in the ``MnM-Approx'' method used in the continuous control experiments.

\begin{wrapfigure}{R}{0.4\textwidth}
    \centering
    \vspace{-1.5em}
    \includegraphics[width=\linewidth]{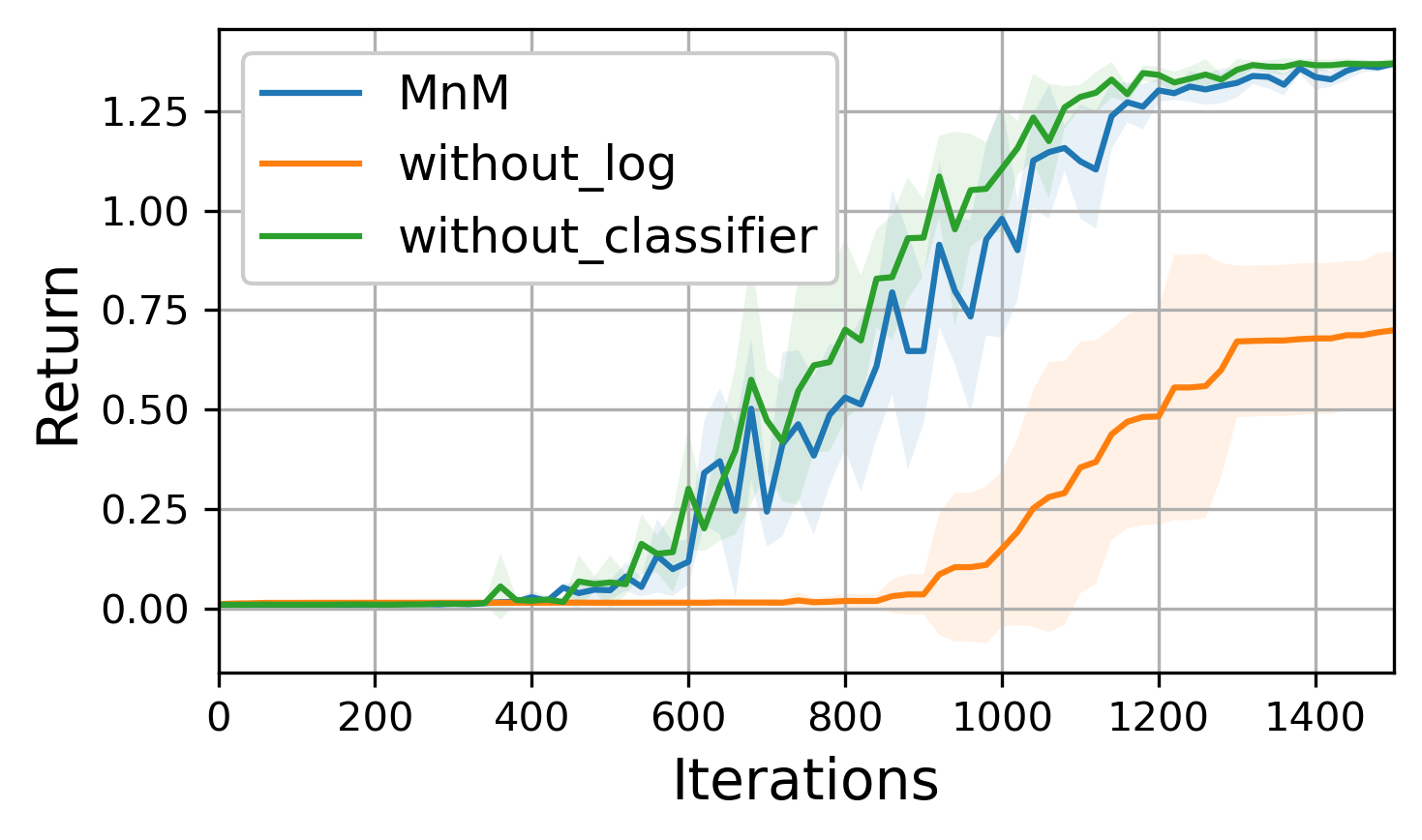}
    \vspace{-1.5em}
    \caption{Stochastic gridworld ablation. \label{fig:stochastic-gridworld-ablation}}
\end{wrapfigure}

\paragraph{Fig.~\ref{fig:stochastic-gridworld-ablation}} In this experiment, we ablate the two factors of the MnM augmented reward (Eq.~\ref{eq:aug-reward}, using the stochastic gridworld from Fig.~\ref{fig:stochastic-gridworld}. The first ablation removes the logarithm, while the second ablation removes the classifier. The results, shown in Fig.~\ref{fig:stochastic-gridworld-ablation}, indicate that the logarithm term is crucial for getting good performance, but that removing classifier term has a relative small effect, and may even boost performance by a small margin. Not to read into these results too much, the aliasing experiment in Fig.~\ref{fig:aliasing} has already demonstrated that the classifier term is critical for ensuring good performance in the presence of function approximation.

\begin{wrapfigure}[11]{R}{0.4\textwidth}
    \centering
    \vspace{-1em}
    \includegraphics[width=\linewidth]{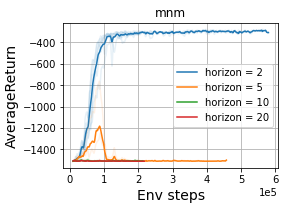}
    \vspace{-2em}
    \caption{\footnotesize\textbf{Model horizon}\label{fig:model-horizon}}
\end{wrapfigure}

\paragraph{Fig.~\ref{fig:model-horizon}}
In our experiments, we used very short model rollouts, only collecting a single transition from the model. Fig.~\ref{fig:model-horizon} shows an ablation experiment on the \texttt{metaworld-drawer-open-v2} task where we increased the model horizon, finding that it uniformly degrades performance.

\begin{figure}[t]
    \vspace{-1em}
    \centering
    \begin{subfigure}[b]{0.25\textwidth}
        \centering
        \includegraphics[width=\linewidth]{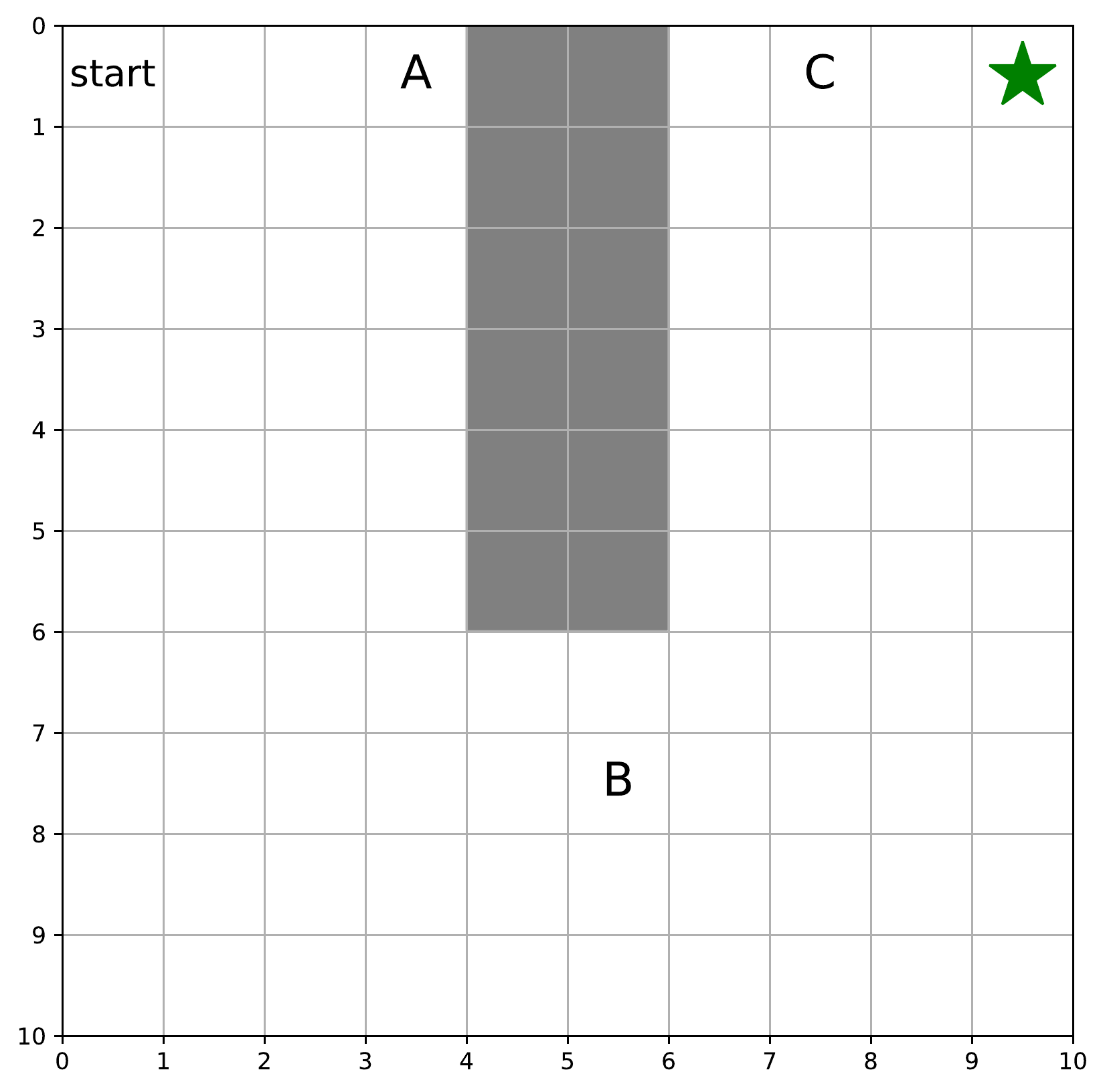}
    \end{subfigure}%
    ~
    \begin{subfigure}[b]{0.75\textwidth}
        \centering
        \includegraphics[width=\linewidth]{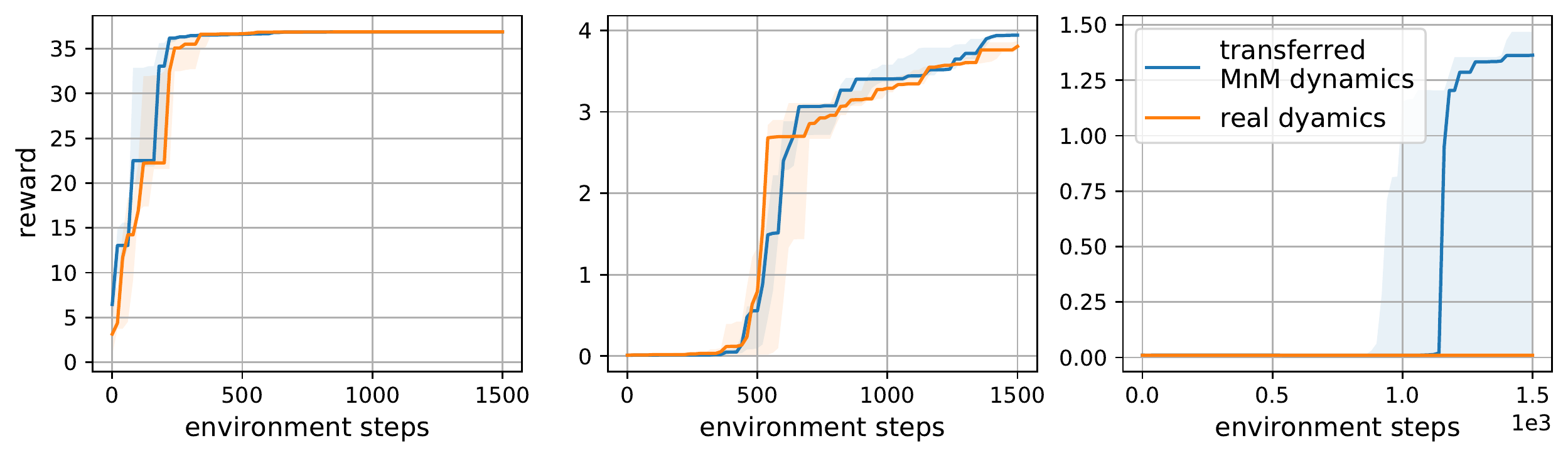}
    \end{subfigure}
    \caption{\footnotesize \textbf{Transferring the learned dynamics to a new task.} We applied Q-learning with either the true environment dynamics or with the dynamics learned by MnM on the original task (reaching the green star). The shaded region corresponds to the $[25\%, 75\%]$ region across 5 random seeds.
    \label{fig:dynamics-transfer}}
\end{figure}

\paragraph{Fig.~\ref{fig:dynamics-transfer}}

In this experiment, we examine the transferability of MnM's learned dynamics model to new tasks. When learning a similar task, we might expect that the optimistic dynamics produced by MnM would be more useful than the true environment dynamics. However, when learning dissimilar tasks, the optimism for one task might hinder exploration towards solving the new task, and might be worse than using the true environment dynamics.

We tested these hypotheses using the stochastic gridworld from Fig.~\ref{fig:stochastic-gridworld}. We took the dynamics learned by MnM (Fig.~\ref{fig:stochastic-gridworld} (right)) and used it to solve new tasks, defined by placing the goal in different locations. As shown in Fig.~\ref{fig:dynamics-transfer} (right), tasks B and C are similar to the original task, in that they involve navigating around the wall; task A is not as similar to the original task, but is much easier.

To study the effectiveness of transferring the dynamics, we applied Q-learning to each of these tasks, either using the (incorrect) optimistic dynamics learned from the original task, or by using the true environment dynamics. We show the results in Fig.~\ref{fig:dynamics-transfer}. On Tasks A and B, we observe little difference between Q-learning with the transferred dynamics model versus the true dynamics model. However, on task C, only by using the transferred dynamics is Q-learning able to solve this challenging task. In summary, these results suggest that the bias of the learned dynamics does not seem to hurt when learning dissimilar tasks, but can accelerate learning of challenging, similar tasks.

\section{Implementation Details}
\label{appendix:details}

All experiments were run on at least three random seeds. Although we were limited by computational constraints, we find that most of our conclusions hold with $p < 0.05$, as noted in the main text.

\subsection{Algorithms}

\paragraph{Value Iteration (Fig.~\ref{fig:stochastic-gridworld} \emph{(right)}, Fig.~\ref{fig:aliasing}, Fig.~\ref{fig:lower-bounds})}
For the tabular experiments that perform value iteration, we perform Polyak averaging of the policy and learned dynamics model with parameter $\tau = 0.5$. We found that the value iteration version of MnM diverged without this Polyak averaging step. Experiments were stopped when the MnM dynamics model (which depends on the value function) changed by less than 1e-6 across iterations, as measured using an $L_0$ norm. For VMBPO we used $\eta = 1$.

\paragraph{Q-learning (Fig.~\ref{fig:stochastic-gridworld})}
For the experiments with Q-learning (both with and without the MnM components), we performed $\epsilon$-greedy exploration with $\epsilon = 0.5$. We used a learning rate of $1e-2$. For this task alone, we compute the MnM dynamics analytically by combining the true environment dynamics with the learned value function, allowing for clearer theoretical analysis. For fair comparison, all methods receive the same amount of data, perform the same number of updates, and are evaluated using the real environment dynamics. For VMBPO we used $\eta = 1$.

\paragraph{SAC for continuous control tasks.}
We used the SAC implementation from TF-Agents~\citep{TFAgents} with the default hyperparameters.

\paragraph{MBPO for continuous control tasks.}
We implement MBPO on top of the SAC implementation from TF-Agents~\citep{TFAgents}. Unless otherwise mentioned, we take the default parameters from this implementation.  We use an ensemble of 5 dynamics models, each with 4 hidden layers of size 256. The dynamics model predicts the whitened difference between the next state and the current state. That is, to obtain the prediction for the next state, the predictions are scaled by a per-coordinate variance, shifted by a per-coordinate mean, and then added to the current state. These whitened predictions are clipped to have a minimum standard deviation of 1e-5; without this, we found that the MBPO model resulted in numerical instability. The model is trained using the standard maximum likelihood objective, with all members of the ensemble being trained on the same data. To sample data from this model we perform 1-step rollouts, starting at states visited in the true dynamics. We perform one batch of rollouts in parallel using a batch size of 256.
To sample the corresponding action, with probability 50\% we take the action that was executed in the true dynamics; with probability 50\% we sample an action from the current policy. We found that this modification slightly improves the results of MBPO. We use a batch size of 256. We have two replay buffers: the model replay buffer has size 256e3 and the replay buffer of real experience has size 1e6. At the start of training, we collect 1e4 transitions from the real environment, train the dynamics model on this experience for 1e5 batches, and only then start training the policy. We use a learning rate of 3e-4 for all components. To stabilize learning, we maintain a target dynamics model using an exponential moving average ($\tau = 0.001$), and use this target dynamics model to sample transitions for training. We update the model, policy, and value functions at the same rate we sample experience from the learned model, which is more frequently than we collect experience from the real environment (see Table~\ref{tab:grad-steps}).

\begin{table}[!t]
    \centering
    \caption{{\footnotesize \textbf{Gradient updates per real environment step}: This parameter was separately tuned for each method and each environment.}}
    \begin{tabular}{c|c|c|c}
     & SAC & MBPO & MnM \\ \hline
        \texttt{HalfCheetah-v2} & - & 40 & 20 \\
        \texttt{Hopper-v2} & - & 40 & 20 \\
        \texttt{Walker2d-v2} & - & 40 & 20\\
        \texttt{metaworld-drawer-open-v2} & 40& 40& 40\\
        \texttt{DClawPoseRandom-v0} & 20& 20& 20\\
        \texttt{DClawTurnRandom-v0} & 40& 40& 40\\
        \texttt{DClawScrewFixed-v0} & 40& 40& 40\\
        \texttt{DClawScrewRandom-v0} & 40& 40& 40\\
    \end{tabular}
    \label{tab:grad-steps}
\end{table}

\paragraph{MnM for Continuous Control Tasks}

We implement MnM on top of the SAC implementation from TF-Agents~\citep{TFAgents}. Unless otherwise mentioned, we take the default parameters from this implementation.
Our model architecture is exactly the same as our MBPO implementation, and we follow the same training protocol.

Unlike MBPO, MnM also learns a classifier for distinguishing real versus model transitions. The classifier architecture is a 2 layers neural network with 1024 hidden units in each layer. We found that this large capacity was important for stable learning. We add input noise with $\sigma = 0.1$ while training the classifier. We whiten the inputs to the classifier by subtracting a coordinate-wise mean and dividing by a coordinate-wise standard deviation. When training the classifier, we take samples from both the dynamics model and the target dynamics model as negative examples, finding that this stabilizes learning somewhat. Following the suggestion of prior work~\citep{salimans2016improved}, we use one-sided label smoothing with value 0.1, only smoothing the negative predictions and not the positive predictions.

We found that gradient penalties and spectral normalization decreased performance. We found that automatically tuning the classifier input noise also decreased performance. We found that mixup had little effect. We found that the loss would often plateau around 1e4 batches, but would eventually start decreasing again after 2e4 - 2e5 batches.

Like the MBPO model, we first collect 1e4 transitions of experience from the real environment using a random policy, then train the dynamics model and classifier for 1e5 batches, and only then start updating the policy. Because the Q values are poor at the start of training, we only add the value term to the model loss (resulting in the optimistic dynamics model) after 2e5 batches (1e5 batches of model training, then 1e5 batches of model+policy training). To further improve stability, we compute the value term in the model loss by taking the minimum over two target value functions (like TD3~\citep{fujimoto2018td3}). We update the model, classifier policy, and value functions at the same rate we sample experience from the learned model, which is more frequently than we collect experience from the real environment (see Table~\ref{tab:grad-steps}).

\subsection{Environments}
This section provides details for the environments used in our experiments. We visualize the environments in Fig.~\ref{fig:envs-subset}.

\begin{figure}[t]
    \centering
    \begin{center}
    \includegraphics[height=1.8cm]{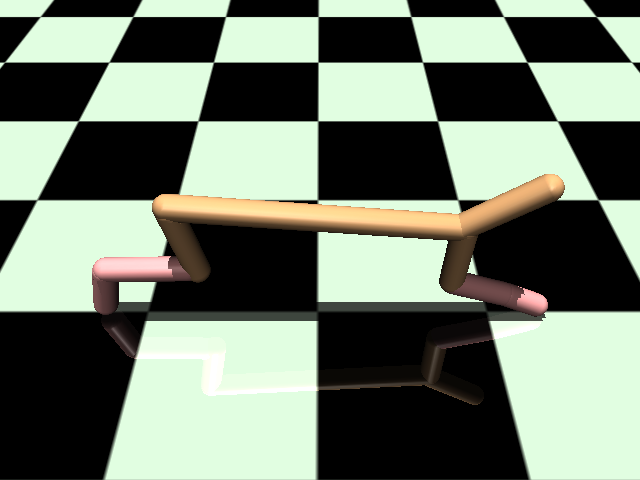}
    \includegraphics[height=1.8cm]{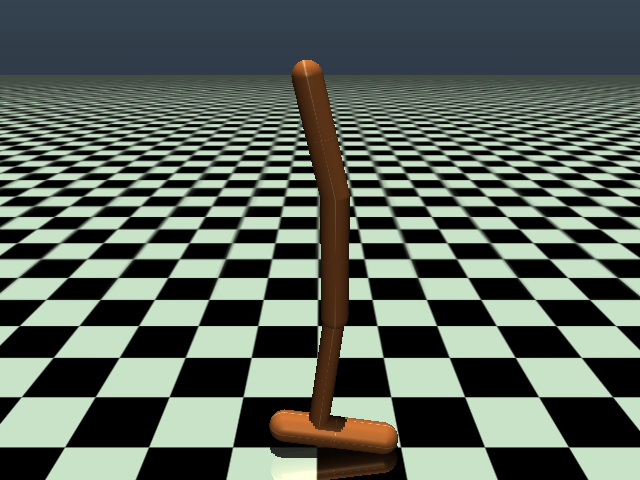}
    \includegraphics[height=1.8cm]{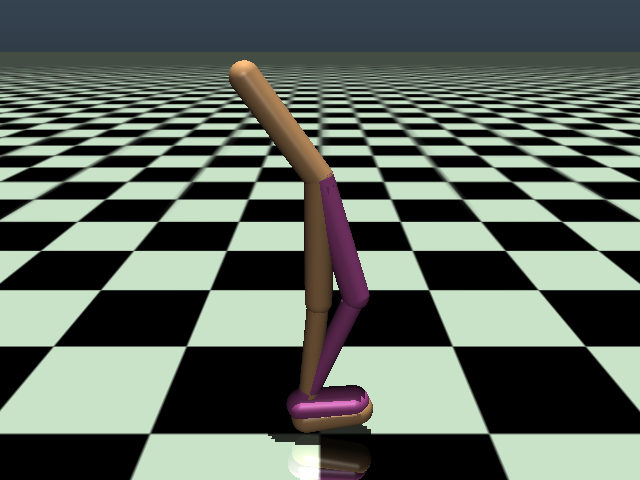}
    \includegraphics[height=1.8cm]{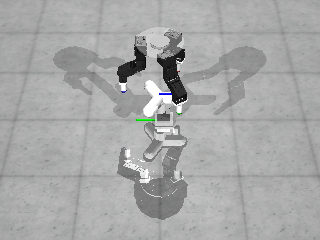}
    \end{center}
    \caption{{\footnotesize\textbf{Environments}: Our experiments included three tasks from OpenAI Gym and four tasks from ROBEL.}}
    \label{fig:envs-subset}
    \vspace{-1em}
\end{figure}

\paragraph{Gridworld for Fig.~\ref{fig:stochastic-gridworld}.}
This task is a $10 \times 10$ gridworld with obstacles shown in Fig.~\ref{fig:stochastic-gridworld}. There are four discrete actions, corresponding to moving to the four adjacent cells. With probability 50\%, the agent's action is ignored and a random action is taken instead. The agent starts in the top-left cell. The agent receives a reward of +0.001 at each time step and a reward of +10 when at the goal state. Episodes have 200 steps and we use a discount $\gamma = 0.9$.

\paragraph{Gridworld for Fig.~\ref{fig:aliasing}.}
This task is a $15 \times 15$ gridworld with obstacles shown in Fig.~\ref{fig:aliasing}. There are four discrete actions, corresponding to moving to the four adjacent cells. The dynamics are deterministic. The agent starts in the top-left cell. The reward is +1.0 at every state except the goal state, where the agent receives a reward of +100.0. We use $\gamma=0.9$ and compute optimal policies analytically using value iteration.
We implement the aliasing by averaging together the dynamics for each block of $3 \times 3$ states. Importantly, the averaging was done to the \emph{relative} dynamics (e.g., action 1 corresponds to move right) not the \emph{absolute} dynamics (e.g., action 1 corresponds to moving to state (3, 4)). To handle edge effects, we modified the averaged dynamics so that the agent could not exit the gridworld.
We computed the classifier analytically using the true and learned dynamics models. However, the augmented rewards become infinite because the learned model assigns non-zero probability to transitions that cannot occur under the true dynamics. This is not a failure of our theory, as the optimal policy would choose to never visit these states, but it presents a challenge for optimization. We therefore added label smoothing with parameter 0.7 to the classifier.

\paragraph{Gridworld for Fig.~\ref{fig:stochastic-gridworld}}
This task is a $10 \times 10$ gridworld with obstacles shown in Fig.~\ref{fig:stochastic-gridworld} \figright. There are four discrete actions, corresponding to moving to the four adjacent cells. With probability 90\%, the agent's action is ignored and a random action is taken instead. The agent starts in the top-left cell. The rewards depend on the Manhattan distance to the goal: transitions that lead away from the goal have a reward of +0.001, transitions that do not change the distance to the goal have a reward of +1.001, and transitions that decrease the distance to the goal have a reward of +2.001. We use $\gamma=0.5$ and compute values and returns analytically using value iteration.

\paragraph{Gridworld for Fig.~\ref{fig:lower-bounds}.}
This task used the same dynamics as Fig.~\ref{fig:stochastic-gridworld}. The one change is that the agent receives a reward of +1.0 at each time step and a reward of +10.0 at the goal state. We estimate the more complex lower bound (Eq.~\ref{eq:lb2}) by also learning the discount factor. However, since we currently do not have a method for learning non-Markovian dynamics to fully optimize this lower bound, we do not expect the lower bound to become tight.

\paragraph{\texttt{HalfCheetah-v2, Hopper-v2, Walker2d-v2}.} These tasks are taken directly from the OpenAI benchmark~\citep{brockman2016openai} without modification.

\paragraph{\texttt{metaworld-drawer-open-v2}.}
This task is based on the \texttt{drawer-open-v2} task from the Metaworld benchmark~\citep{yu2020meta}. To increase the difficulty of this task, we remove the reward shaping term (\texttt{reward\_for\_caging}) and just optimize the reward for opening the drawer (\texttt{reward\_for\_opening}).

\paragraph{\texttt{DClawPoseRandom-v0}, \texttt{DClawTurnRandom-v0}, \texttt{DClawScrewFixed-v0}, \texttt{DClawScrewRandom-v0}.}
These tasks are taken directly from the ROBEL benchmark~\citep{ahn2020robel} without modification.

\subsection{Differences between MnM and MnM-Approx}
Both MnM and MnM-Approx learn the same dynamics classifier, and both use the same GAN-like objective (Eq.~\ref{eq:model-objective}) to update the model. Both perform the same policy updates. The difference is the reward function used to update the Q-function:
\begin{align}
    \tilde{r}(s_t, a_t, s_{t+1}) &= (1 - \gamma) \log r(s_t, a_t) + \log \left( \frac{p(s_{t+1} \mid s_t, a_t)}{q(s_{t+1} \mid s_t, a_t)} \right) \tag{MnM} \\
    \tilde{r}(s_t, a_t, s_{t+1}) &= r(s_t, a_t). \tag{MnM-Approx}
\end{align}

\end{document}